\def\1{\bm{1}}
\DeclareMathAlphabet{\mathsfit}{\encodingdefault}{\sfdefault}{m}{sl}
\SetMathAlphabet{\mathsfit}{bold}{\encodingdefault}{\sfdefault}{bx}{n}
\newcommand{\E}{\mathbb{E}}
\newcommand{\R}{\mathbb{R}}
\DeclareMathOperator*{\argmax}{arg\,max}
\DeclareMathOperator*{\argmin}{arg\,min}
\DeclareRobustCommand\onedot{\futurelet\@let@token\@onedot}
\def\onedot{.} %
\def\eg{\emph{e.g}\onedot, }
\newcommand{\tvec}[1]{\tilde{\mathbf{#1}}}
\newcommand{\vecs}{\mathbf{s}}
\newcommand{\vect}{\mathbf{t}}
\newcommand{\vecx}{\mathbf{x}}
\newcommand{\vecv}{\mathbf{v}}
\newcommand{\vecu}{\mathbf{u}}
\newcommand{\tvecx}{\tvec{x}}
\newcommand{\sizeL}{\lvert \mathcal{L} \rvert}
\newcommand{\defeq}{\vcentcolon=}
\newcommand{\mytilde}{\raise.17ex\hbox{$\scriptstyle\mathtt{\sim}$}}
\theoremstyle{definition}
\newtheorem{definition}{Definition} %
\theoremstyle{plain}
\newtheorem{theorem}[definition]{Theorem} %
\newtheorem{lemma}[theorem]{Lemma}
\theoremstyle{definition}
\newcommand{\setL}{\mathcal{L}}
\newcommand{\setU}{\mathcal{U}}
\newcommand{\setS}{\mathcal{S}}
\newcommand{\setT}{\mathcal{T}}
\newcommand{\setX}{\mathcal{X}}
\newcommand{\setA}{\mathcal{A}}
\newcommand{\setB}{\mathcal{B}}
\newcommand{\kernel}{k_\sigma}
\newcommand{\tkernel}{\tilde{k}_\sigma}
\newcommand{\Tau}{\mathcal{T}}
\newcommand{\sizeS}{\lvert \mathcal{S} \rvert}
\newcommand{\ucover}{\mathrm{UC}_{k_\sigma}}
\newcommand{\hucover}{\widehat{\mathrm{UC}}_{k_\sigma}}
\newcommand{\gcover}{\mathrm{C}_{k_\sigma}}
\newcommand{\hgcover}{\widehat{\mathrm{C}}_{k_\sigma}}
\newcommand{\norm}[1]{\left\lVert#1\right\rVert}
\newcommand{\abs}[1]{\left\lvert#1\right\rvert}
\definecolor{lighterred}{RGB}{255,99,71}
\definecolor{forestgreen}{RGB}{34,139,34}
\definecolor{GabrielColor}{rgb}{0.188, 0.478, 0.858}
\definecolor{dark2green}{rgb}{0.1, 0.65, 0.3}
\definecolor{dark2orange}{rgb}{0.9, 0.4, 0.}
\definecolor{dark2purple}{rgb}{0.4, 0.4, 0.8}
\newcommand{\first}[1]{\textbf{\textcolor{dark2green}{#1}}}
\newcommand{\second}[1]{\textbf{\textcolor{dark2orange}{#1}}}
\newcommand{\third}[1]{\textbf{\textcolor{dark2purple}{#1}}}
\title{Uncertainty Herding: One Active Learning Method for All Label Budgets}
\author{Wonho Bae \\
University of British Columbia \& Borealis AI\\
\texttt{whbae@cs.ubc.ca}
\And
Gabriel L. Oliveira$^*$ \\
Borealis AI \\
\texttt{gabriel.oliveira@borealisai.com}
\And
Danica J. Sutherland\thanks{These authors contributed equally.} \\
University of British Columbia \& Amii \\
\texttt{dsuth@cs.ubc.ca}
}
\newif\ifcolormode
\newcommand{\setcolormodeoff}{\colormodefalse}
\newcommand{\coloredtext}[1]{\ifcolormode\textcolor{blue}{#1}\else#1\fi}
\newenvironment{coloredblock}{%
  \ifcolormode
    \color{blue}
  \fi
}{%
  \ifcolormode
    \color{black} %
  \fi
}
\newcommand{\coloredcaption}[1]{%
  \ifcolormode
    \captionsetup{labelfont={color=blue}, textfont={color=blue}}%
  \else
    \captionsetup{labelfont={color=black}, textfont={color=black}}%
  \fi
  \caption{#1}
}
\begin{document}

\maketitle

\begin{abstract}
Most active learning research has focused on methods which perform well when many labels are available, but can be dramatically worse than random selection when label budgets are small.
Other methods have focused on the low-budget regime, but do poorly as label budgets increase.
As the line between ``low'' and ``high'' budgets varies by problem,
this is a serious issue in practice.
We propose \emph{uncertainty coverage},
an objective which generalizes a variety of low- and high-budget objectives,
as well as natural, hyperparameter-light methods to smoothly interpolate between low- and high-budget regimes.
We call greedy optimization of the estimate Uncertainty Herding;
this simple method is computationally fast,
and we prove that it nearly optimizes the distribution-level coverage.
In experimental validation across a variety of active learning tasks,
our proposal matches or beats state-of-the-art performance in essentially all cases;
it is the only method of which we are aware that reliably works well in both low- and high-budget settings.
\end{abstract}

\section{Introduction}
In active learning, rather than being provided a dataset of input-output pairs as in passive learning, a model strategically requests annotations for specific unlabeled inputs.
The aim is to learn a good model while minimizing the number of required output annotations.
This procedure is generally iterative:
a model is initially trained on a small, labeled dataset, 
then selects the most ``informative" data points from an unlabeled pool to annotate. 
This is particularly useful when labeling is expensive or time-consuming.
For example,
manual annotations of medical imaging by radiologists or pathologists may be especially time-consuming and costly.
Measuring whether a compound interacts with a certain biological compound may require slow, high-accuracy chemical simulations or even lab experiments.
Discovering a customer's product preferences may require giving them many offers, which is slow, potentially expensive, and may produce a poor customer experience.

The most popular line of work in active learning has used notions of uncertainty to measure how informative each candidate data point is expected to be,
and selects data points for labeling to maximize that measure.
Although these uncertainty-based approaches often work well in the \coloredtext{experimental} settings where they are evaluated, \citet{typiclust2022hacohen} and \citet{probcover2022yehuda} have shown that
when there are few total labeled data points
-- called the \emph{low-budget} setting --
they can be substantially worse than random selection, presumably because the model's estimate of uncertainty is not yet reliable.
\coloredtext{To address this, they (and some others) have proposed methods that prioritize} 
“representative” data points, often built on clustering methods such as $k$-means. 
These methods can work substantially better in low-budget regimes,
but themselves often saturate performance and do worse than uncertainty-based selection once budgets are large enough.

In practice it is difficult to know whether a given budget is ``high'' or ``low'' for a particular problem;
it greatly depends on the particular dataset and model architecture.
\citet{select_al2024hacohen} proposed an algorithm, SelectAL, to select whether to use a high- or low-budget method.
This approach, however, assumes discrete budget regimes when there is often not a clear boundary,
and because of the form of the algorithm is also unable to consider uncertainty-based active learning measures directly.
SelectAL also requires re-training models many times, which may be computationally infeasible, and requires a nontrivial amount of data holdout for validation, an issue when budgets are low.
Perhaps most importantly, the algorithm appears quite sensitive to small, subtle decisions;
our attempts at replication\footnote{The authors have not publicly released code, though they indicated they plan to in private communication.} gave extremely inconsistent and unreliable estimates of the regime,
overall yielding much worse performance than reported in the paper.

This motivates the aim of finding a single active learning algorithm which can seamlessly adapt from low- to high-budget regimes.
While there have been various hybrid methods combining representation and uncertainty,
we find in practice that none of these methods work well in low-budget settings.
We therefore propose an objective called \emph{uncertainty coverage},
adding a notion of uncertainty to the ``generalized coverage'' of \citet{maxherding2024bae} and its greedy optimizer MaxHerding.
We call greedy optimization of the empirical estimate of the uncertainty coverage
Uncertainty Herding (UHerding);
we prove UHerding nearly maximizes the true uncertainty coverage.

The uncertainty coverage agrees with the generalized coverage in one extreme setting of parameters,
while agreeing with uncertainty measures in another.
To naturally interpolate between those settings,
we propose a simple method to adaptively and automatically adjust these parameters
such that the objective moves itself from mostly representation-based to mostly uncertainty-based behavior.
With this parameter adaptation scheme, 
we demonstrate that UHerding outperforms MaxHerding (and all other methods) in low-budget regimes,
while also outperforming uncertainty sampling (and all other methods) in high-budget regimes,
across several benchmark datasets (CIFAR-10 and -100, TinyImageNet, DomainNet, and ImageNet)
in both standard supervised and transfer learning settings.
Furthermore, we describe how several existing hybrid active learning methods are closely related to UHerding, and confirm that our parameter adaptation schemes also benefit existing hybrid methods.

\section{Related Work and Background}
\label{sec:related}

The most common framework in active learning is pool-based active learning.
At each step $t \in \{1, 2,\cdots, T\}$,
a labeled set $\setL_t$ \coloredtext{$\subseteq \setX$} is iteratively expanded by querying a set of new data points $\mathcal{S}_t = \{ \vecx_b \}_{b=1}^{B_t}$ from an unlabeled pool of data points  $\setU_t$, \coloredtext{$\subseteq \setX$ where $\setX$ denotes the support set of the data distribution of interest.}
A model is then trained on the new $\setL_{t+1}$.
Usually, the most important component is determining which points to annotate.

\paragraph{Uncertainty-based Methods}

``Myopic'' methods that rely only on a current model's predictions include entropy~\citep{entropy2014wang}, margin~\cite{margin2001scheffer}, confidence, and posterior probability~\citep{heterogeneous1994lewis,sequential1994lewis}.
In the Bayesian setting, 
BALD~\citep{bald2017gal,batchbald2019kirsch} uses mutual information between labels and model parameters.
BAIT~\citep{fishing2021ash} tries to select data points that minimize Bayes risk.
Instead of using a snapshot of a trained model, \citet{tidal2023kye} exploit uncertainties computed in the process of model training.

Some models focus on ``looking ahead'' to predict how a data point will change the model.
These include methods based on expected changes in model parameters \citep{al2009settles,egl2007settles,badge2019ash}, expected changes in model predictions \citep{emoc2014frey,emoc2016kading,emoc_reg2018kading}, and expected error reduction \citep{eer2001roy,eer_gf2003zhu,eer_mi2007guo}. 
These approaches are primarily used with simple models like linear and Na\"ive Bayes models.
For deep models, a neural tangent kernel-based linearization \citep{lookahead2022mohamadi} can be used, though it offers limited improvement over uncertainty sampling relative to its computational cost.

\paragraph{Representation-based Methods}
These methods select data points that represent (or cover) the data distribution.
Traditional methods include $k$-means~\citep{kmeans2003xu}, medoids~\citep{kmedoids2016aghaee} and medians~\citep{kmedian2012voevodski}.
\citet{typiclust2022hacohen} show that selecting representative points is particularly helpful in low-budget regimes,
and propose Typiclust,
which selects the ``most typical'' points in each cluster.
\citet{dpp2019biyik} utilize determinantal point processes instead.

Another approach is to minimize distance between the labeled and unlabeled data distributions,
whether kernel MMD \citep{chen:herding},
Wasserstein distance \citep{low2022mahmood},
or an estimate of the KL divergence tailored to transfer learning as in ActiveFT~\citep{activeFT2023xie}.

\citet{coreset2017sener} convert the objective of active learning into maximum coverage, proposing greedy $k$-center.
Instead of finding the minimum radius to cover all data points, ProbCover~\citep{probcover2022yehuda} greedily select data points to cover the most data points with a fixed radius.
While its performance is sensitive to the choice of radius (also difficult to set),
MaxHerding~\citep{maxherding2024bae} generalizes to a continuous notion of coverage, generalized coverage (or GCoverage), which is less sensitive to parameter choice.
As we build directly on this method, we describe it in more detail.

GCoverage is defined in terms of a function $k : \setX \times \setX \times (\setX \to \mathcal V) \to \R_{\ge 0}$,
which computes a similarity between $\vecx$ and $\vecx'$
based on a feature mapping $g : \setX \to \mathcal V$.
\citet{maxherding2024bae} mostly use the Gaussian kernel\footnote{Although we call the function $\kernel$ and use the term ``kernel,'' it is not generally necessary that the function be positive semi-definite as in kernel methods, nor that it integrate to 1 as in kernel density estimation.
}
$\kernel(\vecx, \vecx'; g) = \exp\left( - \norm{g(\vecx) - g(\vecx')}^2 / \sigma^2 \right)$
with $g$ based on self-supervised feature extractors such as SimCLR~\citep{simclr2020chen}.
The GCoverage and its estimator are
\begin{align}
    \gcover(\setS)
    := \E_{\vecx}\left[ \max_{\vecx' \in \setS} k_\sigma(\vecx, \vecx'; g) \right]
    \approx \frac1N \sum_{n=1}^N \left(\max_{\vecx' \in \setS} k_\sigma(\vecx_n, \vecx'; g)\right)
    =: \hgcover(\setS) \coloredtext{\,\text{with } \vecx_n \in \setU}.
\label{eq:gen_coverage}
\end{align}
MaxHerding greedily maximizes the estimated GCoverage: $\vecx^* \in \argmax_{\tvecx \in \setU} \hgcover (\setL \cup \{\tvecx\})$.
This is a $\left( 1 - \frac1e \right)$ approximation algorithm
for optimizing the monotone submodular function $\hgcover$.

\paragraph{Hybrid Methods}
These methods aim to select informative yet representative data points. \citet{weigted_logistic2004nguyen,dual2007donmez} use margin-based selection weighted by clustering scores. \citet{density_weighted2008settles} weight uncertainty measures like entropy by cosine similarity. 
For neural networks, BADGE \citep{badge2019ash} uses $k$-means++ on loss gradient space, while ALFA-Mix \citep{alfamix2022parvaneh} applies $k$-means to uncertain points based on feature interpolation.

Since uncertainty and representation-based active learning approaches behave differently in different budget regimes, %
SelectAL~\citep{select_al2024hacohen} and TCM~\citep{bridging2024doucet} provide methods to decide when to switch from low-budget to high-budget methods.
TCM provides some insights for a transition point, but their insights are based on extensive experimentation, and hard to generalize to different settings.
SelectAL was discussed in the previous section.
In this work, we propose a more robust approach with minimal re-training, covering continuous budget regimes.

\paragraph{Clustering in active learning}
\label{par:clustering}

$k$-means is widely used in active learning to promote diversity in selection. 
BADGE, ALFA-Mix, and Typiclust, for example, use $k$-means or variants.
$k$-means centroids, however, do not in general correspond to any available point, and thus other methods (such as Typiclust's density criterion) must be used to choose a point from a cluster.
It would be natural to instead enforce centroids to be data points, yielding $k$-medoids.
The common alternating update scheme similar to $k$-means often leads to poor local optima \citep{fasterPAM2021schubert}.
The Partitioning Around Medoids (PAM) algorithm \citep{pam2009kaufman,fastPAM2019schubert,fasterPAM2022schubert} gives better clusters,
but is much slower.
MaxHerding selects points with essentially equivalent downstream performance
as using the far more expensive Faster PAM algorithm~\citep{fasterPAM2021schubert} for GCoverage.

As we demonstrate empirically in \cref{app:fig:clustering}, MaxHerding achieves significantly better performance than both $k$-means and $k$-means++ for active learning.
Thus,
in \cref{subsec:connection}, we replace $k$-means with MaxHerding for the theoretical analysis of some clustering-based active learning methods.

    \begin{figure*}[t!]
    \centering
    \begin{subfigure}[b]{0.66\textwidth} %
        \includegraphics[width=\textwidth]{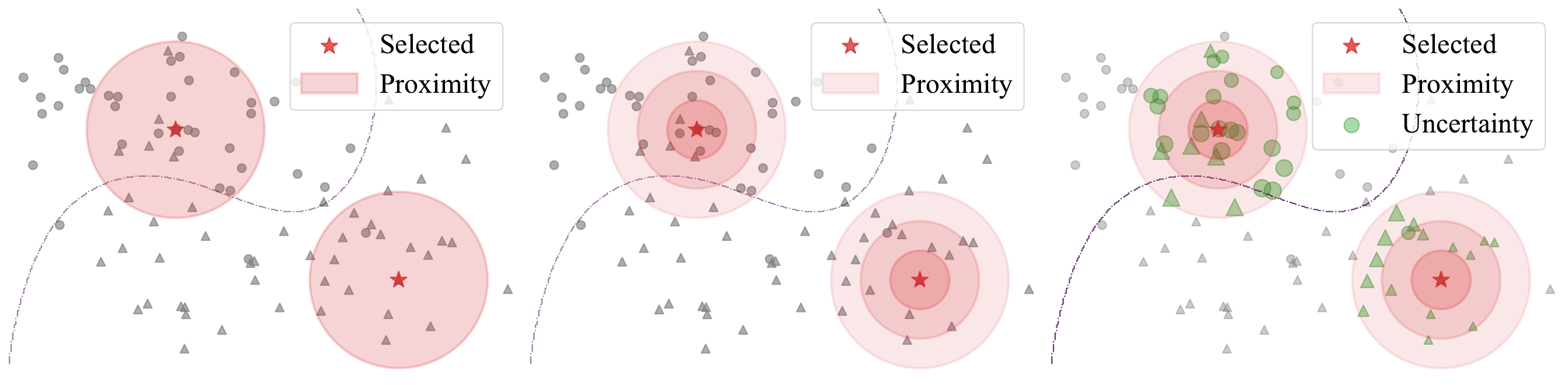}
        \coloredcaption{Comparison of ProbCover, GCoverage, and UCoverage.}
        \label{fig:intuition_coverage}
    \end{subfigure}
    \hfill
    \begin{subfigure}[b]{0.31\textwidth} %
    \includegraphics[width=\textwidth]{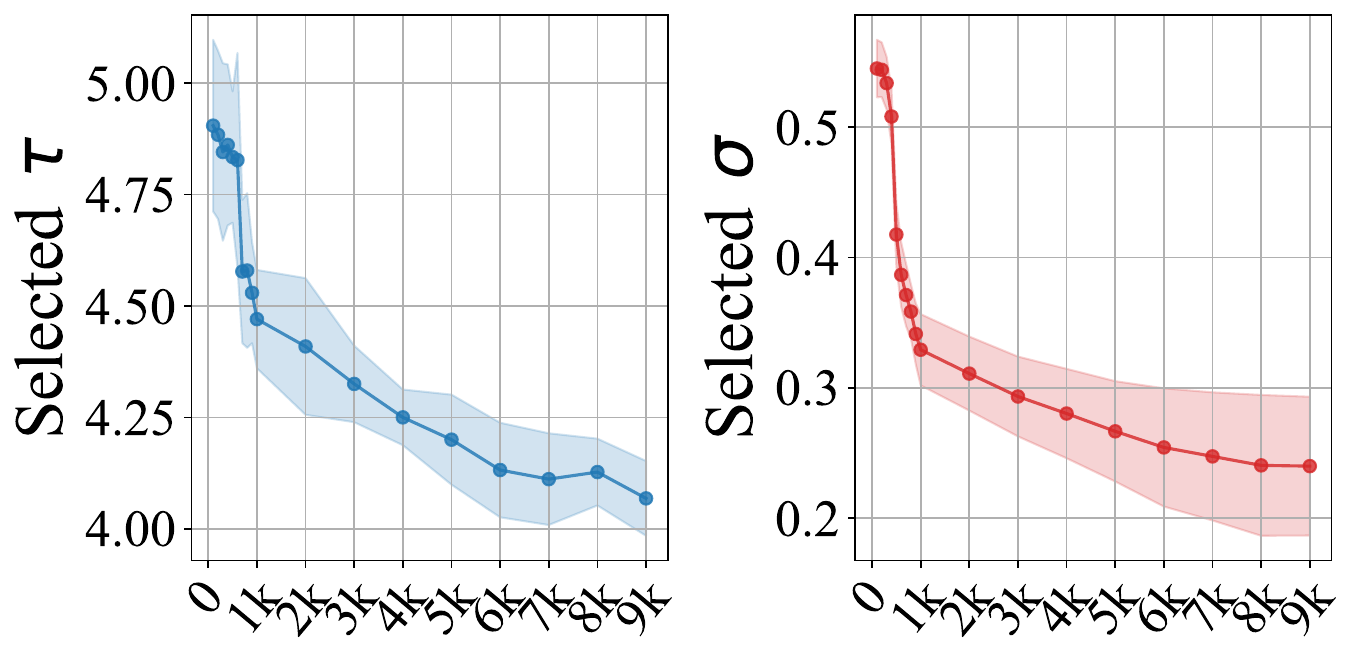}
        \coloredcaption{Temperature\ $\tau$, lengthscale $\sigma$.}
        \label{fig:intuition_change_parameters}
    \end{subfigure}
    \coloredcaption{Left: illustration of coverages (\cref{subsec:ucoverage}). Right: parameter adaptation (\cref{subsec:adapt}).}
    \label{fig:intuition}
\end{figure*}

\section{Method}

We introduce a novel approach called Uncertainty Herding (UHerding), designed to ``interpolate'' between the state-of-the-art representation-based method MaxHerding
and any choice of uncertainty-based method (\eg Margin), 
providing effectiveness across different budget regimes.

\subsection{Uncertainty Coverage} \label{subsec:ucoverage}
We first define a measure of how much uncertainty a set of data points covers.

\begin{definition} %
For any subset $\setS \subset \setX$, a nonnegative-valued function $\kernel$,\footnote{\coloredtext{We typically use $\kernel(\vecx, \vecx'; g) = \psi((g(\vecx) - g(\vecx')) / \sigma) \in [0, 1]$ for some function $\psi$ and a feature map $g$.}}
and nonnegative-valued uncertainty function $U$, \emph{uncertainty coverage} (UCoverage) is defined and empirically estimated as
\begin{equation*}
    \ucover( \setS )
    = \E_{\vecx} \left[U(\vecx; f) \max_{\vecx'\in\setS} \kernel(\vecx, \vecx'; g) \right]
    \approx
    \frac{1}{N} \sum_{n=1}^N U(\vecx_n; f) \max_{\vecx' \in \setS } \kernel(\vecx_n, \vecx'; g)
    = \hucover( \setS )
.\end{equation*}
\end{definition}

Here, the uncertainty function $U$ is based on a model $f$ which is updated as the model trains,
while $\kernel$ uses a fixed feature extractor $g$.
UCoverage weights the GCoverage of \eqref{eq:gen_coverage} with a choice of uncertainty measure $U(\vecx; f)$;
choosing $U(\vecx; f) = 1$ immediately recovers GCoverage.
In \cref{fig:intuition_coverage}, we visualize differences 
in assessing coverage.
    ProbCover (left) treats all data points within a $\sigma$-radius ball around selected points equally, assigning uniform weight to each.
    GCover (middle) introduces a smooth proximity measure by applying a kernel function, such as the RBF kernel, to weigh nearby points more effectively.
    UCover (right) additionally incorporates uncertainty. The green circles represent uncertainty, with their size proportional to each point's uncertainty. Ultimately, UCover evaluates how well the selected data points account for uncertainty across the space.
We now show $\hucover$ is a good estimator of $\ucover$.
All proofs are in \cref{app:proofs}.
\begin{restatable}{theorem}{uniconv} \label{thm:uniconv}
    Let $U(\vecx; f) \in [0, U_{\max}]$,
    $\kernel(\vecx, \vecx'; g) = \tkernel(g(\vecx), g(\vecx')) \in [0, 1]$,
    $\{ g(\vecx) : \vecx \in \setU \} \subseteq \{ \vect \in \R^d : \norm \vect \le R \}$,
    and $\abs{\tkernel(\vect, \vect') - \tkernel(\vect, \vect'')} \le L_\sigma \norm{\vect' - \vect''}$.
    Let $\setL \subseteq \setX$ be arbitrary and fixed.
    Assume $B/N < 16 R^2$.
    Then, with probability at least $1 - \delta$ over the choice of the $N$ iid data points in $\setU \subseteq \setX$ used to estimate $\hucover$,
    \emph{all} size-$B$ sets $\setS$ (not only subsets of $\,\setU$)
    have low error:
    \[
        \raisebox{0.8ex}{$\displaystyle\sup_{\substack{\setS \subseteq \setX\\\abs\setS = B}}$}
        \abs{\ucover(\setL \cup \setS) - \hucover(\setL \cup \setS)}
        \le
            U_{\max}
            \sqrt{\frac B N}
            \left[ 8 L_\sigma + \frac12 \sqrt{
                d \log\left(R^2 \frac N B \right)
                + \frac2B \log\frac2\delta
            } \, \right]
    .\]
\end{restatable}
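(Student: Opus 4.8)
The plan is to control the supremum by a uniform-convergence argument over the class of functions indexed by size-$B$ sets $\setS$, using a symmetrization-and-Rademacher-complexity bound together with a covering-number estimate for the relevant function class. Concretely, for a fixed $\setL$ define, for each admissible $\setS$, the function
\[
    h_\setS(\vecx) \defeq U(\vecx; f)\,\max_{\vecx' \in \setL \cup \setS} \tkernel(g(\vecx), g(\vecx'))
    = U(\vecx;f)\max\Bigl\{ \max_{\vecx'\in\setL}\tkernel(g(\vecx),g(\vecx')),\; \max_{\vecx'\in\setS}\tkernel(g(\vecx),g(\vecx'))\Bigr\},
\]
so that $\ucover(\setL\cup\setS) = \E[h_\setS]$ and $\hucover(\setL\cup\setS) = \frac1N\sum_n h_\setS(\vecx_n)$, and $0 \le h_\setS \le U_{\max}$. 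Then $\sup_\setS |\E h_\setS - \frac1N\sum h_\setS|$ is exactly the uniform deviation over the class $\gH \defeq \{ h_\setS : \setS\subseteq\setX, |\setS|=B\}$. The first step is the standard bounded-differences / McDiarmid concentration: since each $h_\setS$ is bounded by $U_{\max}$, changing one sample changes the empirical average by at most $U_{\max}/N$, so with probability $1-\delta$ the supremum is at most its expectation plus $U_{\max}\sqrt{\tfrac{1}{2N}\log\tfrac1\delta}$. (To land the factor $\tfrac2\delta$ and the constants exactly as stated one presumably uses a one-sided bound plus the symmetric counterpart, or a slightly sharper concentration inequality — I'd match constants at the end.)

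The second step is to bound $\E\,\sup_{\setS}|\E h_\setS - \frac1N\sum h_\setS|$ by symmetrization, giving $2\,\E\,\mathfrak{R}_N(\gH)$ where $\mathfrak R_N$ is the empirical Rademacher complexity, and then to control $\mathfrak R_N(\gH)$ via Dudley's entropy integral (or a single-scale Massart bound) using covering numbers of $\gH$ in the empirical $L^2$ metric. The key geometric observation is that $h_\setS$ depends on $\setS$ only through the map $\vecx \mapsto \max_{\vecx'\in\setS}\tkernel(g(\vecx),g(\vecx'))$, which is a pointwise maximum of $B$ functions each of the form $\vect \mapsto \tkernel(g(\vecx),\vect)$ with $\vect$ ranging over the ball of radius $R$ in $\R^d$; by the Lipschitz assumption $|\tkernel(\vect,\vect')-\tkernel(\vect,\vect'')|\le L_\sigma\norm{\vect'-\vect''}$, an $\epsilon/L_\sigma$-cover of that ball (of size $(3RL_\sigma/\epsilon)^d$ or so) induces an $\epsilon$-cover of the family of individual kernel-slices in sup-norm; taking all $B$-fold choices of cover points gives an $\epsilon$-cover of the "$\max$ over $\setS$" functions of size at most $(3RL_\sigma/\epsilon)^{dB}$. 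Multiplying by $U_{\max}\in[0,U_{\max}]$ (a fixed Lipschitz-in-value, bounded multiplier) only rescales the metric, so $\log N(\epsilon,\gH,L^2(\hat P)) \lesssim dB\log(R L_\sigma U_{\max}/\epsilon)$. Feeding this into Dudley's integral $\mathfrak R_N(\gH) \lesssim \frac{1}{\sqrt N}\int_0^{U_{\max}}\sqrt{\log N(\epsilon,\gH,L^2)}\,d\epsilon$ and evaluating the integral yields a bound of order $U_{\max}\sqrt{\tfrac{dB}{N}\log(\text{something})}$; the additive $8L_\sigma U_{\max}\sqrt{B/N}$ term and the precise argument $R^2 N/B$ of the logarithm should emerge from optimizing the truncation level in Dudley's integral (truncating at scale $\sim L_\sigma\sqrt{B/N}$), which is presumably where the hypothesis $B/N < 16R^2$ is used to keep that log argument $\ge 1$.

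Assembling: $\sup_\setS|\ucover - \hucover| \le 2\E\mathfrak R_N(\gH) + U_{\max}\sqrt{\tfrac{1}{2N}\log\tfrac2\delta}$, and plugging in the entropy-integral estimate and simplifying constants gives the claimed $U_{\max}\sqrt{B/N}\,[\,8L_\sigma + \tfrac12\sqrt{d\log(R^2 N/B) + \tfrac2B\log\tfrac2\delta}\,]$. I expect the main obstacle to be the covering-number / Dudley bookkeeping that produces exactly these constants and this logarithmic argument — in particular, handling the pointwise-maximum structure (so that the "effective dimension" is $dB$ rather than $d$) cleanly, choosing the Dudley truncation scale so that the non-integrable behavior near $\epsilon=0$ contributes precisely the $8L_\sigma\sqrt{B/N}$ term, and threading the $\log\tfrac2\delta$ factor through a one-sided argument. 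The multiplicative weight $U(\vecx;f)$ is harmless since it is bounded and fixed (not optimized over), so it factors out as $U_{\max}$ throughout; the only mild care needed is that the empirical $L^2$ distance between $h_\setS$ and $h_{\setS'}$ is at most $U_{\max}$ times the distance between the underlying max-kernel functions.
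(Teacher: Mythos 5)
Your plan is sound and would deliver a bound of the same form and rate, but the paper takes a more elementary route that shares your key geometric step while avoiding the empirical-process machinery. The shared core is exactly your covering observation: parametrize each candidate $\setS$ by the tuple $(g(\vecs_1),\dots,g(\vecs_B))$ in the radius-$R$ ball of $(\R^d)^B$ (with the max-of-Euclidean norm), use the $1$-Lipschitzness of $\max$ together with $\abs{\tkernel(\vect,\vect')-\tkernel(\vect,\vect'')}\le L_\sigma\norm{\vect'-\vect''}$ to show that both $\ucover(\setL\cup\setS)$ and $\hucover(\setL\cup\setS)$ are $L_\sigma U_{\max}$-Lipschitz in this parameter, and cover the ball with $(4R/\eta)^{Bd}$ balls. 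Where you then go through symmetrization, Rademacher complexity, and a truncated Dudley/Massart bound plus McDiarmid, the paper simply applies Hoeffding's inequality at each of the $(4R/\eta)^{Bd}$ cover centers with failure probability $\delta/(4R/\eta)^{Bd}$, takes a union bound, and adds the Lipschitz discretization error $2L_\sigma U_{\max}\eta$; choosing $\eta = 4\sqrt{B/N}$ gives the $8L_\sigma$ term and the $\tfrac12\sqrt{d\log(R^2N/B)+\tfrac2B\log\tfrac2\delta}$ term with everything under a single square root. The direct union bound is what produces the stated constants: your decomposition would instead yield two separate square-root terms (one from McDiarmid, one from the entropy bound) plus a factor of $2$ from symmetrization and Massart's constant, so it is correct but looser by a small constant factor, as you anticipated. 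Your reading of the remaining details is accurate: the bounded multiplier $U(\vecx;f)$ just scales distances by $U_{\max}$, the truncation/discretization scale $\eta\sim\sqrt{B/N}$ is what produces the additive $L_\sigma\sqrt{B/N}$ term, and the hypothesis $B/N<16R^2$ serves only to keep the covering-number logarithm well-defined at that scale.
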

Since typically $B \ll N$
-- we query up to perhaps a few hundred points at a time, out of a dataset of at least tens of thousands but perhaps millions --
even optimizing our noisy estimate of the uncertainty coverage does not introduce substantial error.%
\footnote{%
It is worth emphasizing that,
although \citet{maxherding2024bae} mentioned a simple Hoeffding bound for $\hgcover$,
this bound only applies to fixed $\setS$ \emph{independent of $\setU$}
-- while in reality $\setS \subseteq \setU$.
Ignoring this problem and taking a union bound over all $\binom{N}{B}$ subsets of $\setU$
would yield a uniform convergence bound of $U_{\max} \sqrt{\frac{1}{2N} \log\left( \binom{N}{B} \frac2\delta \right)} \le U_{\max} \sqrt{ \frac{B}{2 N} \left[ \log\left( e \frac N B \right) + \frac{1}{B} \log\frac2\delta \right]}$.
The rate of \cref{thm:uniconv} is very similar, with the advantage of being correct.
}
If $U$ is given by the margin between probabilistic predictions, then $U_{\max} \le 1$; other notions can also be easily bounded.
Assuming $\kernel \le 1$ is for convenience, but a different upper bound can simply be absorbed into $U_{\max}$.
The bound is indeed 1
for the Gaussian kernel on $g$ used by \citet{maxherding2024bae} (as well as by us);
this kernel also has $L_\sigma = \sqrt{\frac2e} \cdot \frac1\sigma$.
The kernel which corresponds to the probability coverage of \citet{probcover2022yehuda}, however, is not Lipschitz,
suggesting why it is so sensitive to $\sigma$.
Self-supervised representations in $g$
are often normalized to $R = 1$,
and usually have $d$ at most a few hundred.

\begin{algorithm}[t!]
\newcommand{\atcp}[1]{\tcp*[r]{\makebox[\commentWidth]{#1\hfill}}}
\SetKwInput{KwInput}{Input}
\caption{Uncertainty herding with parameter adaptation}
\label{alg:our_method}
\KwInput{Initial labeled set $\setL_0$, Initial unlabeled set $\setU_0$, a set of temperatures $\Tau$, the number of iterations T, \coloredtext{a set of query budgets $\{ B_t \}_{t=0}^{T-1} $}, a classifier $f$, and a feature extractor $g$}
\For{$t \in [0, 1, \cdots, T-1]$}{
    \tcp{\textcolor{blue}{Parameter adaptation}}
    Compute $\tau^* = \argmin_{\tau \in \Tau} \text{ECE}(\coloredtext{f^{\setL_t^{\text{train}}}}, \setL_t^{\text{val}})$ where $\setL_t^{\text{train}}$ and $\setL_t^{\text{val}}$ are random split from $\setL_t$, and $\coloredtext{f^{\setL_t^{\text{train}}}}$ refers to a classifier $f$ trained on $\setL_t^{\text{train}}$ \\
    Compute $\mathbf{k} \in \mathbbm{R}^{\lvert \setU_t \rvert}$ with $\mathbf{k}_n = \max_{\vecx' \in \setL_t} k_{\sigma^*}(\vecx_n, \vecx')$ for $\sigma^* = \min_{\vecu, \vecv \in \setL_t, \vecu \neq \vecv } D(\vecu, \vecv; g)$ \\
    \tcp{\textcolor{blue}{Greedy selection based on the uncertainty coverage}}
    \For{$b \in [1, 2, \cdots, B_t]$}{ 
        Select $\vecx_b^* = \argmax_{\tvecx \in \setU} \frac{1}{N} \sum_{n=1}^N U(x_n\coloredtext{; f_{\tau^*}}) \cdot \max(k_{\sigma^*}(\vecx_n, \tvecx) - \mathbf{k}_n, 0)$\\
        Update $\mathbf{k}_n \leftarrow \max (k_{\sigma^*}(\vecx_n, \vecx_b^*), \mathbf{k}_n ),\, \forall n \in \lvert \mathcal{U}_t \rvert $
    }
    Update $\setL_{t+1} \leftarrow \setL_t \cup \{\vecx_b^*\}_{b=1}^{B_t}$ and $\setU_{t+1} \leftarrow \setU_t \,\backslash \, \{\vecx_b^*\}_{b=1}^{B_t}$ 
}
\end{algorithm}

\subsection{Parameter Adaptation}
\label{subsec:adapt}

We wish to choose parameters of UCoverage such that it smoothly changes from behaving like generalized coverage in low-budget regimes
to behaving like uncertainty in high-budget regimes.

\paragraph{Handling the low-budget case: calibration}
When $\abs\setS$ is small,
we would like to roughly replicate GCoverage.
We can do this by making our uncertainty function constant:
\begin{restatable}{proposition}{togcover} \label{thm:togcover}If  $\,\forall \vecx \in \setU, U(\vecx; f) \rightarrow c$ where $c \geq 0$, the estimated {UCoverage} $\hucover(\setS)$ approaches the estimated {GCoverage} $\hgcover(\setS)$, up to a constant.
\label{prop:unc_adapt}
\end{restatable}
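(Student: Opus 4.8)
The plan is to simply unfold the definition of the empirical estimator and pass to the limit term by term; no concentration argument is needed because the proposition concerns only the empirical quantities. Recall that
\[
\hucover(\setS) = \frac1N \sum_{n=1}^N U(\vecx_n; f)\, \max_{\vecx' \in \setS} \kernel(\vecx_n, \vecx'; g),
\qquad
\hgcover(\setS) = \frac1N \sum_{n=1}^N \max_{\vecx' \in \setS} \kernel(\vecx_n, \vecx'; g),
\]
both finite averages over the $N$ points $\vecx_n \in \setU$. Since $\setU$ is a finite set, the hypothesis that $U(\vecx; f) \to c$ for every $\vecx \in \setU$ is automatically \emph{uniform} over $\setU$, i.e.\ $\max_{1 \le n \le N}\abs{U(\vecx_n; f) - c} \to 0$ along whatever process drives the model $f$.

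Next I would bound the discrepancy uniformly in $\setS$. For each $n$, using $\kernel \le 1$ (any uniform upper bound on $\kernel$ works and can be absorbed into constants),
\[
\abs{U(\vecx_n; f) - c}\,\max_{\vecx' \in \setS}\kernel(\vecx_n, \vecx'; g) \le \abs{U(\vecx_n; f) - c}.
\]
Averaging over $n$ gives
\[
\abs{\hucover(\setS) - c\,\hgcover(\setS)}
\le \frac1N \sum_{n=1}^N \abs{U(\vecx_n; f) - c}
\le \max_{1 \le n \le N}\abs{U(\vecx_n; f) - c},
\]
and this bound does not depend on $\setS$. Hence $\hucover(\setS) \to c\,\hgcover(\setS)$, uniformly over all $\setS$. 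Finally I would observe that since $c$ is a fixed nonnegative constant independent of $\setS$, when $c > 0$ the greedy step $\argmax_{\tvecx \in \setU}\hucover(\setL \cup \{\tvecx\})$ agrees, in the limit, with $\argmax_{\tvecx \in \setU}\hgcover(\setL \cup \{\tvecx\})$ — the $\argmax$ being invariant under multiplication by a positive constant — so UHerding's selection rule reduces to MaxHerding's (the $c = 0$ case is degenerate but still consistent with the stated conclusion ``up to a constant'').

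There is essentially no hard step here: the only point requiring care is the mode of convergence, and it is handled by noting that pointwise convergence of $U$ over the \emph{finite} set $\setU$ is the same as uniform convergence, which together with the uniform boundedness of $\kernel$ lets us interchange the limit with the finite average. I expect that paragraph-long formalization of ``approaches'' to be the extent of the work.
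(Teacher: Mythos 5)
Your proposal is correct and follows essentially the same route as the paper's proof: unfold the finite empirical average and pass the limit through term by term, yielding $\hucover(\setS) \to c\,\hgcover(\setS)$. Your additional observations --- that pointwise convergence over the finite pool is automatically uniform, that boundedness of $\kernel$ gives a bound on $\abs{\hucover(\setS) - c\,\hgcover(\setS)}$ uniform in $\setS$, and that the $\argmax$ is invariant under positive scaling --- are a slightly more careful rendering of the same argument rather than a different one.
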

When $\abs\setL$ is very small,
our model $f$ is bad.
Models with poor predictive power will generally have near-constant uncertainty if they are well-\emph{calibrated}.
We thus encourage calibration primarily through
temperature scaling, a simple but effective post-hoc calibration method~\citep{temp_scaling2017guo}:
\begin{enumerate}[noitemsep,nosep]
    \item Split $\setL_t$ into $\setL_t^{\text{train}}$ and $\setL_t^{\text{val}}$, and train a model $f$ on $\setL_t^{\text{train}}$, obtaining $\coloredtext{f^{\setL_t^{\text{train}}}}$.
    \item
    Choose $\tau^*$ among some candidate set $\Tau$ to
    minimize the expected calibration error \coloredtext{(ECE)}
    \citep{ece2015naeini}
    of the temperature-scaled \coloredtext{predictions $f(\vecx) / \tau$} on $\setL_t^\text{val}$. 
    \coloredtext{Here $f(\vecx) \in \mathbbm{R}^K$ denotes a logit vector, with $K$ the number of classes}.
    \item Compute uncertainties with 
    \coloredtext{$\tau^*$-scaled softmax: $\log \hat p_{\tau^*}(\vecx) \propto {f_{\tau^*}(\vecx)} \vcentcolon= {f(\vecx)} / \tau^*$}.
\end{enumerate}

The selected temperatures $\tau^*$ are generally large when $\sizeL$ is small and so $f$ has poor predictions, which makes $U(\vecx; f)$ close to constant. 
As $\sizeL$ increases and $f$'s predictions improve,
$\tau^*$ decreases
(see \cref{fig:intuition_change_parameters}, left panel),
making uncertainty values more distinct.

\paragraph{Handling the high-budget case: decreasing $\sigma$}

As $\abs\setL$ increases,
the effect of a single new data point on the trained model tends to become more ``semantically local,''
implying it is reasonable to treat points as ``covering'' only closer and closer points in $g$ space
by decreasing the radius $\sigma$.
As a heuristic, we choose the radius to be the minimum distance between data points in the labeled set $\setL$: 
$
    \sigma^* = \min_{\vecu, \vecv \in \setL_t, \vecu \neq \vecv }
    \norm{ g(\vecu) - g(\vecv) }
$.
Since $\setU$ is bounded,
as $\abs\setL$ grows we have that $\sigma^* \to 0$
(see \cref{fig:intuition_change_parameters}, right panel);
thus \cref{thm:tounc} eventually applies,
becoming uncertainty-based selection.

\begin{restatable}{proposition}{tounc} \label{thm:tounc}
Suppose $k_\sigma(\vecx, \vecx'; g) = \psi\bigl((g(\vecx) - g(\vecx')) / \sigma \bigr)$
for a fixed $g : \setX \to \mathbb R^d$ which is injective on $\setU$,
and \coloredtext{a function $\psi : \mathbb R^d \to [0, 1]$
with $\psi(0) = 1$ and for all $t \in \mathbb R^d$ with $\norm t = 1$, $\lim_{a \to \infty} \psi(a t) = 0$
}.
If $\sigma \to 0$,
the estimated {uncertainty coverage} $\hucover(\setS)$ approaches
$\sum_{s=1}^{\sizeS} U(\vecx_s; f)$, up to a constant.
\label{prop:radius_adapt}
\end{restatable}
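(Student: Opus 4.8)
The plan is to compute the pointwise limit as $\sigma \to 0^+$ of each summand of $\hucover(\setS) = \frac1N \sum_{n=1}^N U(\vecx_n; f)\,\max_{\vecx' \in \setS} k_\sigma(\vecx_n, \vecx'; g)$ and then pass the limit through the finite sum. Since the greedy procedure selects from $\setU = \{\vecx_1, \dots, \vecx_N\}$, we take $\setS \subseteq \setU$, so every element of $\setS$ is one of the sample points; moreover, injectivity of $g$ on $\setU$ guarantees that a sample point not lying in $\setS$ has a $g$-value distinct from that of every element of $\setS$.

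Fix $n$ and abbreviate $M_\sigma(\vecx_n) := \max_{\vecx' \in \setS}\psi\bigl((g(\vecx_n) - g(\vecx'))/\sigma\bigr)$. I split into two cases. If $\vecx_n \in \setS$, the term $\vecx' = \vecx_n$ contributes $\psi(0) = 1$ to the maximum, and since $\psi \le 1$ we get $M_\sigma(\vecx_n) = 1$ for every $\sigma > 0$. If $\vecx_n \notin \setS$, then $g(\vecx_n) \neq g(\vecx')$ for every $\vecx' \in \setS$; writing $r_{\vecx'} := \norm{g(\vecx_n) - g(\vecx')} > 0$ and the unit vector $u_{\vecx'} := (g(\vecx_n) - g(\vecx'))/r_{\vecx'}$, the assumed decay of $\psi$ along rays gives $\psi\bigl((g(\vecx_n) - g(\vecx'))/\sigma\bigr) = \psi\bigl((r_{\vecx'}/\sigma)\,u_{\vecx'}\bigr) \to 0$ as $\sigma \to 0^+$, and since $\setS$ is finite, $M_\sigma(\vecx_n) \to 0$.

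Finally, the outer sum has only $N$ terms with $0 \le U(\vecx_n;f)\,M_\sigma(\vecx_n) \le U_{\max}$, so I interchange limit and sum to obtain $\lim_{\sigma \to 0^+}\hucover(\setS) = \frac1N \sum_{n:\,\vecx_n \in \setS} U(\vecx_n;f) = \frac1N \sum_{s=1}^{\sizeS} U(\vecx_s;f)$, which is $\sum_{s=1}^{\sizeS} U(\vecx_s;f)$ up to the fixed multiplicative constant $1/N$ (independent of $\setS$, hence irrelevant to greedy maximization), matching the sense of ``up to a constant'' already used in \cref{thm:togcover}. There is no genuinely hard step here; the only points needing care are (i) invoking $\setS \subseteq \setU$ together with injectivity of $g$ to rule out a non-selected sample point sharing a $g$-value with a selected one (which would otherwise keep $M_\sigma$ pinned at $1$), and (ii) the elementary interchange of the $\sigma \to 0^+$ limit with the finite maximum and the finite sum.
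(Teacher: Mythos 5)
Your proof is correct and follows essentially the same route as the paper's: show that as $\sigma \to 0$ the kernel degenerates to the indicator $\mathbbm{1}[\vecx_n \in \setS]$ (the paper asserts this directly; you justify it via $\psi(0)=1$, the ray-decay of $\psi$, injectivity of $g$ on $\setU$, and finiteness of $\setS$), then collapse the sum. Your version is somewhat more careful about why injectivity and $\setS \subseteq \setU$ are needed, but there is no substantive difference in approach.
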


As we shall see in \cref{subsec:ablation},
UCoverage with fixed $\tau$ and $\sigma$
is not robust across budget levels,
performing worse than MaxHerding in low-budget
and worse than Margin
in high-budget regimes.
With our adaption techniques, however,
UCoverage outperforms competitors across label budgets.

\begin{figure*}[t!]
    \centering
    \begin{subfigure}[b]{0.32\textwidth} %
        \includegraphics[width=\textwidth]{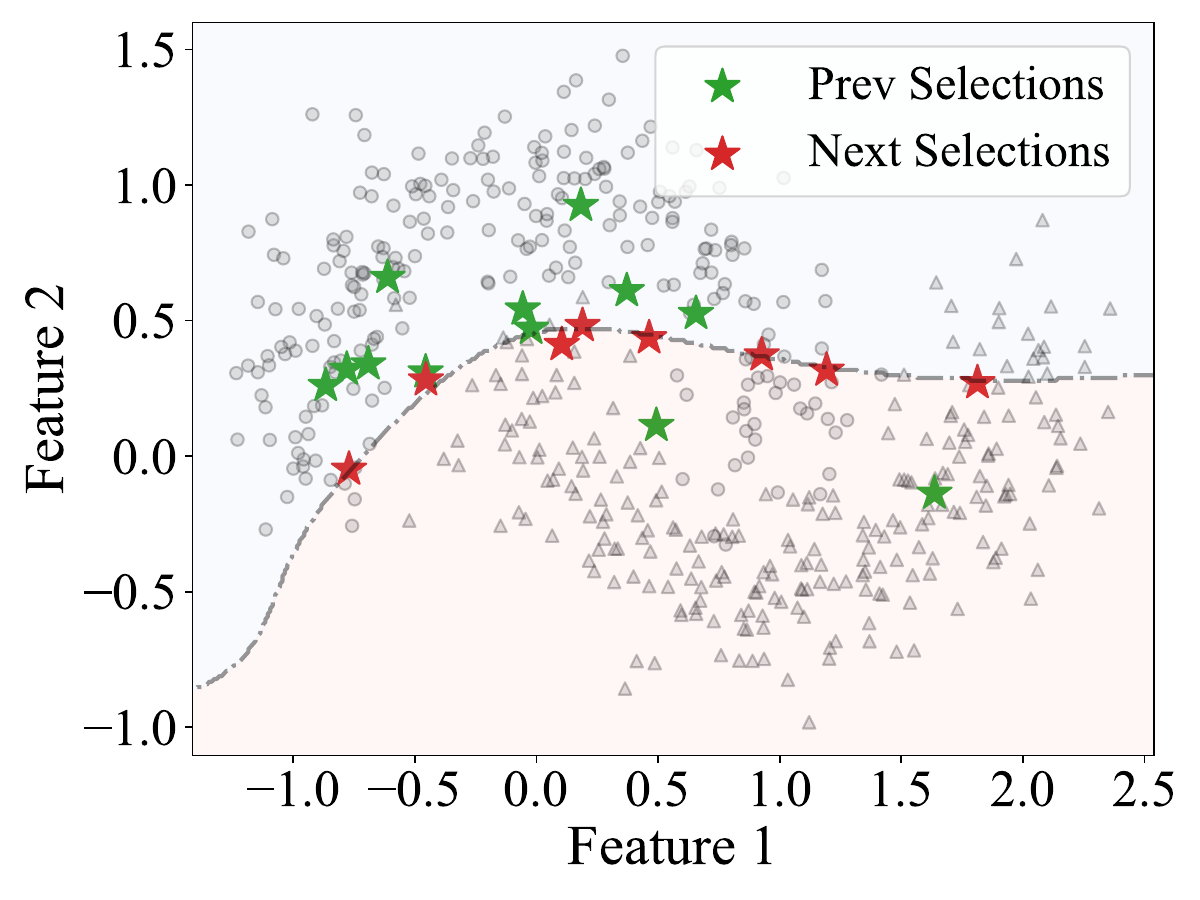}
        \coloredcaption{Margin}
        \label{fig:margin}
    \end{subfigure}
    \hfill
    \begin{subfigure}[b]{0.32\textwidth} %
        \includegraphics[width=\textwidth]{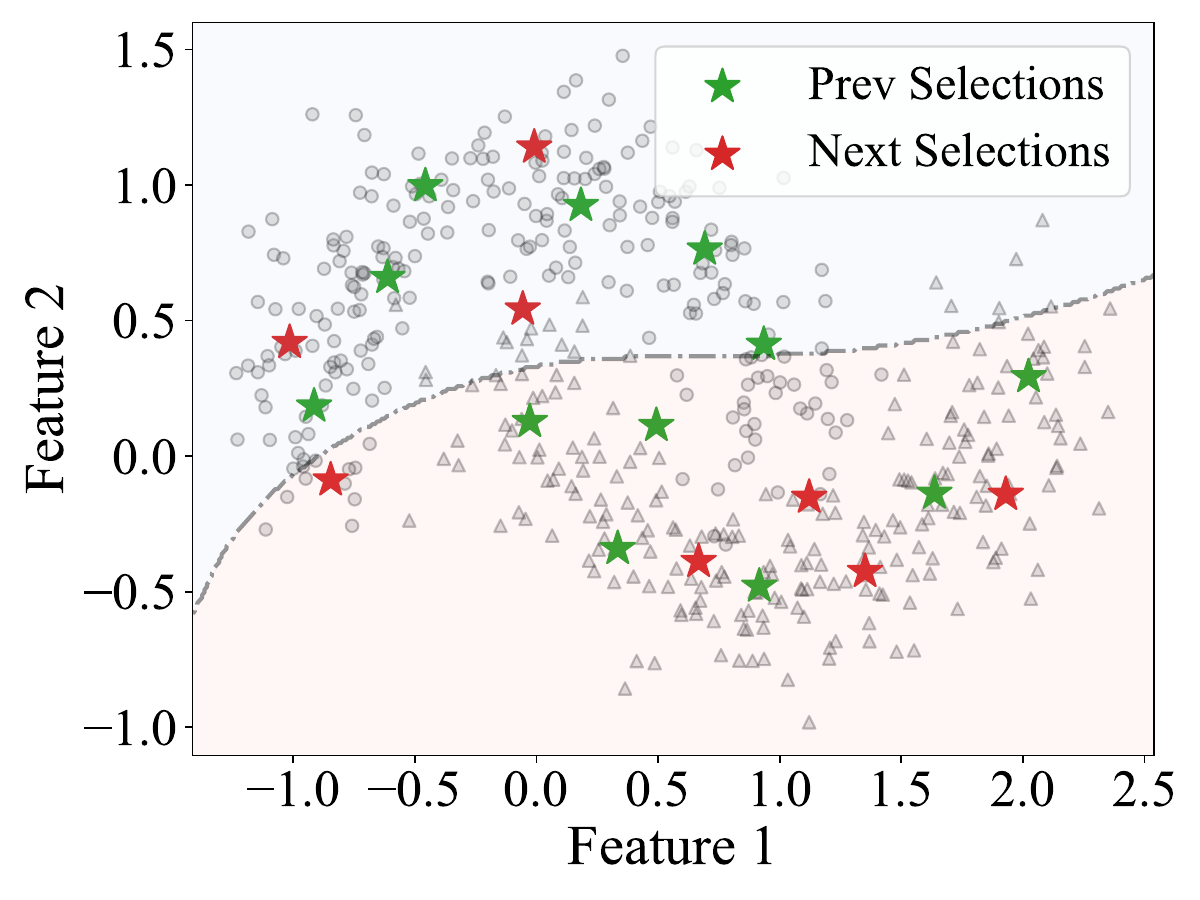}
        \coloredcaption{MaxHerding}
        \label{fig:max_herding}
    \end{subfigure}
    \hfill
    \begin{subfigure}[b]{0.32\textwidth} %
        \includegraphics[width=\textwidth]{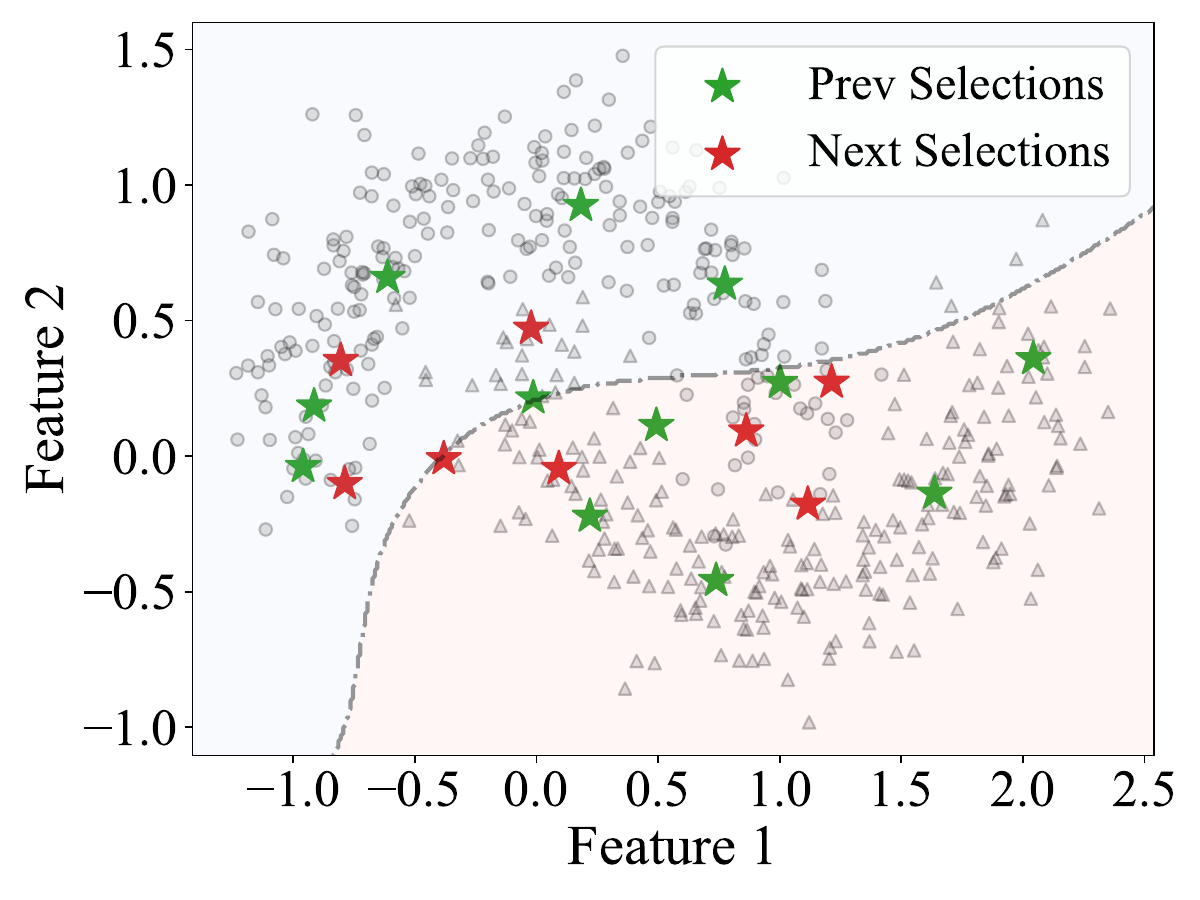}
        \coloredcaption{UHerding}
        \label{fig:uherding_margin}
    \end{subfigure}
    \coloredcaption{Comparison of Margin, MaxHerding and proposed UHerding on half-moon toy data.}
    \label{fig:toy}
    \vspace{-1mm}
\end{figure*}

\subsection{Uncertainty Herding}

To obtain an actual active learning method,
we still need an algorithm to maximize $\hucover$.
We could select a batch by finding
$\bar\setS \in \argmax_{\setS \subseteq \setU, \sizeS = B} \hucover(\setL \cup \setS)$.
This is equivalent to the weighted kernel $k$-medoids objective, with weights determined by uncertainty $U(\vecx; f)$;
thus, we could use
Partitioning Around Medoids  (PAM)~\citep{pam2009kaufman} to try to find $\bar\setS$.

\citet{maxherding2024bae}, however,
observed that even with a highly optimized implementation,
this algorithm is much slower to optimize GCoverage
than greedy methods, with little improvement in active learning performance.
We thus focus on greedy selection,
which we call Uncertainty Herding (UHerding)
by analogy to MaxHerding
(itself an analogy to kernel herding, \citealp{chen:herding}).
\begin{definition}[Uncertainty Herding]
To greedily add a single data point to a set $\setL' = \setL \cup \setS$,
select
\begin{align}
     \vecx^* &\in \argmax_{\tilde{\vecx} \in \setU}
    \hucover( \setL' \cup \{ \tvecx \} )
    = \argmax_{\tvecx \in \setU }  \left( \frac{1}{N} \sum_{n=1}^N U(\vecx_n; f_\tau) \cdot  \max_{\vecx' \in \setL' \cup \{ \tvecx \} } \kernel (\vecx_n, \vecx'; g)  \right)
    \label{eq:uherding}
.\end{align}
UHerding selects a new batch $\setS$ of size $\abs\setS$
by picking one point at a time to add to $\setL'$.
\end{definition}

UHerding improves uncertainty measures by accounting for both the uncertainty of a selected point and its influence on reducing nearby uncertainty.
It improves on MaxHerding by incorporating uncertainty,
putting less weight on covering already-certain points.

\begin{restatable}{corollary}{coveroptim} \label{thm:coveroptim}
    In the setting of \cref{thm:uniconv},
    let $\hat\setS \subseteq \setU$
    be the result of UHerding for $B$ steps to add to $\setL$,
    and $\mathrm{UC}^* = \max_{{\setS \subseteq \setU, \abs{\setS} = B}} \ucover(\setL \cup \setS)$ the optimal coverage obtainable among $\setU$.
    Then
    \[
        \ucover(\setL \cup \hat\setS) \ge
        \left( 1 - \frac1e \right)
        \mathrm{UC}^*
        - \left( 2 - \frac1e \right)
        U_{\max} \sqrt{\frac B N} \left[ 8 L_\sigma + \frac12 \sqrt{d \log\left( R^2 \frac N B \right) + \frac2B \log\frac2\delta} \, \right]
    .\]
\end{restatable}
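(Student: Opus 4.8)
The plan is to chain together the uniform convergence bound of \cref{thm:uniconv} with the standard $(1 - 1/e)$ guarantee for greedy maximization of a monotone submodular function, treating UHerding as greedy maximization of the empirical objective $\hucover(\setL \cup \cdot)$. First I would observe that for fixed $\setL$, the set function $\setS \mapsto \hucover(\setL \cup \setS)$ is monotone and submodular: it is a nonnegative weighted sum (weights $\tfrac1N U(\vecx_n; f) \ge 0$) of functions of the form $\setS \mapsto \max_{\vecx' \in \setL \cup \setS} \kernel(\vecx_n, \vecx'; g)$, each of which is a ``max-coverage''–type function and hence monotone submodular; monotone submodularity is preserved under nonnegative linear combinations. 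Since UHerding by definition greedily adds the point maximizing $\hucover(\setL' \cup \{\tvecx\})$ at each of the $B$ steps, the Nemhauser–Wolsey–Fisher theorem gives
\[
    \hucover(\setL \cup \hat\setS) \ge \left(1 - \frac1e\right) \max_{\setS \subseteq \setU, \abs\setS = B} \hucover(\setL \cup \setS).
\]

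Next I would invoke \cref{thm:uniconv}: on the high-probability event (probability $\ge 1-\delta$), writing $\Delta := U_{\max}\sqrt{B/N}\bigl[8 L_\sigma + \tfrac12\sqrt{d\log(R^2 N/B) + \tfrac2B\log\tfrac2\delta}\,\bigr]$ for the stated error term, we have $\abs{\ucover(\setL \cup \setS) - \hucover(\setL \cup \setS)} \le \Delta$ simultaneously for all size-$B$ sets $\setS \subseteq \setX$ — in particular for $\hat\setS$ and for the empirical and true maximizers within $\setU$. Then the argument is the usual two-sided sandwich: let $\setS^\star \in \argmax_{\setS\subseteq\setU,\abs\setS=B}\ucover(\setL\cup\setS)$ achieve $\mathrm{UC}^*$. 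We get
\[
    \ucover(\setL \cup \hat\setS) \ge \hucover(\setL \cup \hat\setS) - \Delta
    \ge \left(1 - \tfrac1e\right)\hucover(\setL \cup \setS^\star) - \Delta
    \ge \left(1 - \tfrac1e\right)\bigl(\ucover(\setL \cup \setS^\star) - \Delta\bigr) - \Delta,
\]
where the second inequality uses that $\hat\setS$ is near-optimal for $\hucover$ (bounding $\max_{\setS\subseteq\setU}\hucover \ge \hucover(\setL\cup\setS^\star)$ since $\setS^\star \subseteq \setU$), and the third applies uniform convergence at $\setS^\star$. Collecting terms yields $\ucover(\setL\cup\hat\setS) \ge (1 - \tfrac1e)\mathrm{UC}^* - (2 - \tfrac1e)\Delta$, which is exactly the claimed bound.

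The only real subtlety — and the reason \cref{thm:uniconv} is stated the way it is — is that $\hat\setS$ is itself a data-dependent random set (it depends on $\setU$, since UHerding optimizes an empirical average over $\setU$), so a naive Hoeffding bound for a fixed $\setS$ would not apply to it; this is precisely handled by the uniform-over-all-size-$B$-sets guarantee of \cref{thm:uniconv}, so here it is just a matter of quoting that theorem correctly. I should also note in passing that $\hucover$ here is the coverage functional using the same $N$ iid points of $\setU$ that appear in \cref{thm:uniconv}, and that $\setU$ plays the double role of the sample defining $\hucover$ and the ground set over which we select; this is consistent with the theorem's hypotheses ($\setS \subseteq \setU \subseteq \setX$), so no additional care is needed. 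Everything else is bookkeeping: the submodularity claim is standard, and the $(2 - 1/e)$ constant simply comes from the coefficient $(1 - 1/e) + 1 = 2 - 1/e$ on $\Delta$ after distributing.
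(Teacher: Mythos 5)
Your proposal is correct and follows essentially the same route as the paper: establish monotone submodularity of $\setS \mapsto \hucover(\setL \cup \setS)$, apply the Nemhauser--Wolsey--Fisher $(1-\tfrac1e)$ guarantee to the greedy empirical maximizer, and then sandwich with the uniform convergence bound of \cref{thm:uniconv} applied at both $\hat\setS$ and $\setS^\star$, yielding the $(2-\tfrac1e)$ coefficient exactly as in the paper. Your remark that the uniformity over all size-$B$ sets is what licenses evaluating the bound at the data-dependent $\hat\setS$ is also the point the paper emphasizes.
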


\Cref{fig:toy} visually compares the next selected data points (in \textcolor{red}{red}) by Margin, MaxHerding, and UHerding (using Margin uncertainty) on a two-class half-moon dataset represented by $\mathbin{\vcenter{\hbox{\scalebox{1.7}{$\circ$}}}}$ and $\mathbin{\vcenter{\hbox{\scalebox{1.0}{$\boldsymbol{\vartriangle}$}}}}$.
A logistic regression model with fifth-order polynomial features is trained at each iteration, with the decision boundary after training on 12 previously selected points (in \textcolor{forestgreen}{green}) shown as dashed lines.
As expected, Margin selects points near the decision boundary, which can lead to suboptimal models if the predicted boundary deviates from the true one. MaxHerding selects the most representative points based on prior selections but ignores model predictions; this gives quick generalization with few labeled points, but performance saturates over time as it neglects points near the boundary. UHerding balances these approaches by initially selecting representative points and gradually focusing on uncertain points near the boundary, improving performance over time.

\subsection{Connection to Hybrid Methods}
\label{subsec:connection}

UHerding is closely connected to existing hybrid active learning methods.
As mentioned in \cref{par:clustering},
we replace $k$-means and $k$-means++ in various algorithms
with greedy kernel $k$-medoids (MaxHerding) to simplify our arguments;
this does not harm their effectiveness.
We also apply a self-supervised feature extractor $g$ instead of a feature extractor from a classifier $f$,
since feature embeddings from $g$ are more informative
when $f$ is trained on a small labeled set. 
Finally, we often assume
that $\kernel$ is in fact a positive-definite (RKHS) kernel;
this is true for the kernels we use.

\begin{restatable}[Weighted $k$-means of \citealt{wkmeans2019zhdanov}]{proposition}{wkmeans}
    Define 
    an uncertainty measure $U(\vecx; f)$
    from another uncertainty measure $U(\vecx; f)$
    as $U(\vecx; f) \defeq U'(\vecx; f) \cdot \mathbbm{1}[U'(\tvecx; f) \geq \nu]$,
    where $\nu \ge 0$ satisfies $\sum_{n=1}^N \mathbbm{1}[U'(\vecx_n; f) \ge \nu] = M$\coloredtext{, a pre-defined number}.
    Then weighted $k$-means
    with uncertainty $U'$, changed to use greedy kernel $k$-medoids, is UHerding with uncertainty $U$ and the same kernel.
\label{prop:weighted_kmeans}
\end{restatable}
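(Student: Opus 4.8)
The plan is to show that the two procedures — weighted $k$-medoids (greedy version) with uncertainty $U'$ restricted to a fixed query budget $M$, and UHerding with the modified uncertainty $U$ — produce the same greedy objective at every step, hence the same selections. I would begin by writing out explicitly what ``weighted $k$-medoids with uncertainty $U'$'' means after the stated substitutions: the objective is $\sum_{n=1}^N U'(\vecx_n; f) \max_{\vecx' \in \setS} \kernel(\vecx_n, \vecx'; g)$, but crucially the method of \citet{wkmeans2019zhdanov} first prefilters the unlabeled pool down to the $M$ most-uncertain points (those with $U'(\vecx_n;f) \ge \nu$) and then clusters \emph{only within that restricted set}. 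So the sum effectively runs over the $M$ retained points. The first step is to observe that prefiltering to $\{\vecx_n : U'(\vecx_n;f)\ge\nu\}$ and then weighting by $U'$ is exactly the same as weighting the full sum by $U(\vecx_n;f) = U'(\vecx_n;f)\,\mathbbm{1}[U'(\vecx_n;f)\ge\nu]$, since the indicator zeroes out precisely the points that were filtered away. Thus the weighted $k$-medoids objective over the restricted set equals $N \cdot \hucover(\setS)$ with uncertainty $U$, modulo the constant $1/N$.

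Next I would address the greedy selection step. Greedy kernel $k$-medoids adds the point $\tvecx$ maximizing the (weighted) objective $\sum_n U(\vecx_n;f)\max_{\vecx'\in\setS\cup\{\tvecx\}}\kernel(\vecx_n,\vecx';g)$, which is literally the definition of the UHerding update in \eqref{eq:uherding} with the same kernel and uncertainty $U$. So the $\argmax$ at every greedy step coincides, and by induction on the batch-selection loop the two methods select identical batches. I would note that the algorithmic identity $\max_{\vecx'\in\setL'\cup\{\tvecx\}}\kernel = \max(\kernel(\cdot,\tvecx), \veck_n)$ used in \cref{alg:our_method} makes this correspondence concrete but is not logically needed for the proof.

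One subtlety I would flag and dispatch: there is an apparent typo in the statement (it says ``from another uncertainty measure $U(\vecx;f)$'' but clearly means $U'$), and one should also be careful that the candidate set for the $\argmax$ in weighted $k$-medoids is the restricted pool of $M$ points while UHerding's $\argmax$ in \eqref{eq:uherding} ranges over all of $\setU$. These agree because any $\tvecx$ with $U(\tvecx;f)=0$ contributes nothing new to the objective beyond what is already covered — actually more carefully, selecting a zero-uncertainty point could still \emph{increase} the objective by better covering other points, so I would instead argue that \citet{wkmeans2019zhdanov}'s restriction is part of the \emph{definition} of their method, and UHerding with uncertainty $U$ reproduces it only if we likewise understand UHerding as optimizing $\hucover$ with this $U$; any $\tvecx$ still gets evaluated, and the equivalence is at the level of the objective function, not a claim that the two search spaces are literally the same set.

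The main obstacle is purely expository rather than mathematical: pinning down exactly which normalization, prefiltering, and candidate-set conventions \citet{wkmeans2019zhdanov} use, so that the ``up to a constant'' and ``same kernel'' qualifiers are precisely what is needed and no more. The underlying computation is a one-line rewriting of an indicator into a weight. I would therefore keep the proof short: state the objective after substitutions, apply $U = U'\cdot\mathbbm{1}[U'\ge\nu]$, and conclude that greedy maximization of the two objectives is step-for-step identical.
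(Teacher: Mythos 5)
Your proposal is correct and follows essentially the same route as the paper: absorb the prefiltering indicator into the weight so that $U'$ becomes $U = U'\cdot\mathbbm{1}[U'\ge\nu]$, and identify the resulting greedy weighted kernel $k$-medoids step with the UHerding update \eqref{eq:uherding}. The only cosmetic difference is that the paper first writes the $k$-medoids objective in squared-RKHS-distance form $\lVert \phi(\vecx_n)-\phi(\vecx')\rVert_{\mathcal{H}}^2$ and then converts the $\min$ over distances into a $\max$ over kernel values (implicitly using that $\kernel(\vecx,\vecx)$ is constant), whereas you start directly from the kernel-coverage form; your explicit discussion of whether the indicator restricts the candidate $\tvecx$ or the summands $\vecx_n$ is, if anything, more careful than the paper's one-line version of that step.
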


As we shall see in \cref{sec:experiment}, UHerding significantly outperforms weighted $k$-means, indicating that it is crucial to (a) convert $k$-means into MaxHerding and (b) apply parameter adaptation.
\begin{restatable}[ALFA-Mix of \citealt{alfamix2022parvaneh}]{proposition}{alfamix}
    Let $\hat{y}(\cdot; f)$ be the predicted label of an input under $f$.
    Define an uncertainty measure
    \begin{align}
        U(\vecx; f) \defeq
        \mathbbm{1}\left[
          \exists\, \text{ class } j \text{ s.t. }\;
          \hat{y}\bigl(
            \alpha_j(\vecx) \, g(\vecx) + (1-\alpha_j(\vecx)) \, \bar g_j
          ; f\bigr)
          \neq \hat{y}(g(\vecx); f)
       \right]
    \label{eq:alphamix_unc}
    \end{align}
where
$\bar{g}^j$ is the mean of feature representations belonging to class $j$
and $\alpha_j(\vecx) \in [0, 1)$ is the same parameter as determined by ALFA-Mix.
Then ALFA-Mix, with clustering replaced by greedy kernel $k$-medoids, is UHerding with uncertainty $U$ and the same kernel.
\label{prop:alfa_mix}
\end{restatable}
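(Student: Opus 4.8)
The plan is to unpack what ALFA-Mix actually does and match it term-by-term to the UHerding update in \eqref{eq:uherding}. ALFA-Mix proceeds in two stages: first it forms a \emph{candidate set} $C \subseteq \setU$ of ``interpolation-sensitive'' points, namely those $\vecx$ for which some convex combination $\alpha_j(\vecx)\, g(\vecx) + (1-\alpha_j(\vecx))\,\bar g_j$ of the feature and a class-mean feature flips the model's prediction; then it runs a clustering routine ($k$-means in the original, replaced here by greedy kernel $k$-medoids $=$ MaxHerding, as licensed by the discussion in \cref{par:clustering} and \cref{subsec:connection}) on $C$ and queries the resulting centers. The first observation is that membership in $C$ is exactly the event $\{U(\vecx;f)=1\}$ for the $U$ defined in \eqref{eq:alphamix_unc}, and non-membership is $\{U(\vecx;f)=0\}$; so $U$ is the $\{0,1\}$-indicator of the ALFA-Mix candidate set.

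Next I would rewrite greedy kernel $k$-medoids on the restricted set $C$ as UHerding with this $U$. MaxHerding / greedy kernel $k$-medoids on a ground set $C$ greedily picks $\vecx^* \in \argmax_{\tvecx} \sum_{\vecx_n \in C} \max_{\vecx' \in \setL' \cup \{\tvecx\}} \kernel(\vecx_n, \vecx'; g)$, i.e.\ it sums the coverage only over points in $C$. Writing that sum over all of $\setU$ but inserting the indicator $\mathbbm 1[\vecx_n \in C] = U(\vecx_n; f)$ gives $\sum_{n=1}^N U(\vecx_n; f)\max_{\vecx' \in \setL' \cup \{\tvecx\}}\kernel(\vecx_n,\vecx';g)$, which up to the harmless $\tfrac1N$ factor is precisely the UHerding objective of \eqref{eq:uherding}. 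Since the $\argmax$ over $\tvecx \in \setU$ is unaffected by whether we restrict the candidate pool for \emph{selection} (ALFA-Mix also only queries points in $C$, but a point outside $C$ contributes zero marginal uncertainty-coverage gain once... \emph{actually} one should note ALFA-Mix queries from $C$ while UHerding queries from $\setU$; I would handle this either by remarking that UHerding's greedy step will in fact prefer points with positive $U$ whenever the budget is not yet exhausted, or by stating the correspondence as ``UHerding restricted to selecting from $\{\vecx : U(\vecx;f)>0\}$,'' matching how the analogous point is treated for weighted $k$-means in \cref{prop:weighted_kmeans}). The same kernel $\kernel$ is used on both sides by construction, completing the identification.

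The main obstacle is not any computation but pinning down the precise algorithmic specification of ALFA-Mix so the match is faithful: I need the candidate-set construction (the existential over classes $j$ and the specific $\alpha_j(\vecx)$) to line up exactly with \eqref{eq:alphamix_unc}, and I need to argue cleanly that replacing $k$-means by greedy kernel $k$-medoids and the classifier features by the self-supervised $g$ does not change the \emph{form} of the method (only its instantiation) — these substitutions are asserted to be benign in \cref{subsec:connection}, so I would simply invoke that. A secondary subtlety, shared with \cref{prop:weighted_kmeans}, is the candidate-pool-versus-full-pool discrepancy in the selection step; I would dispatch it with a one-line remark phrased identically to the weighted $k$-means case so the two propositions are consistent.
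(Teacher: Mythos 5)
Your proposal is correct and follows essentially the same route as the paper's proof: identify the ALFA-Mix candidate-set membership with the $\{0,1\}$-valued $U$ of \eqref{eq:alphamix_unc}, then absorb that indicator as a weight inside the greedy kernel $k$-medoids objective to recover the UHerding update \eqref{eq:uherding}. You are in fact slightly more careful than the paper, which simply writes the weighted objective with the $\argmax$ over the full pool and does not comment on the candidate-pool-versus-full-pool discrepancy that you flag (and which it handles for weighted $k$-means by an extra multiplicative indicator on the query point).
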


The equivalence of weighted $k$-means and ALFA-Mix to UHerding with the right choice of uncertainty measure implies that \cref{prop:unc_adapt,prop:radius_adapt} also apply to them.
There is a weaker connection to BADGE; UHerding and BADGE are not equivalent with any choice of uncertainty measure.
However, a variant of BADGE with greedy kernel $k$-medoids
uses the kernel,
\begin{equation} \label{eq:badge-kernel}
h(\vecx_n, \vecx') = 2 \langle q(\vecx_n), q(\vecx') \rangle \kernel(\vecx_n, \vecx'; g) - \norm{q(\vecx_n)}_2^2 \kernel(\vecx_n, \vecx_n; g) - \norm{q(\vecx')}_2^2 \kernel(\vecx', \vecx'; g)
\end{equation}
where $q(\vecx) = \hat{y}(\vecx; f) - \hat{p}(\vecx; f)$ with $f$ being a classifier.
This does satisfy the following statement,
showing properties similar to \cref{prop:unc_adapt,prop:radius_adapt} hold.
\begin{restatable}[BADGE, \citealp{badge2019ash}]{proposition}{badge}
    If $\forall \vecx \in \setU,\, \hat{p}(\vecx; f) \rightarrow \frac{1}{K} \vec{1}$, then this BADGE approaches a slightly modified MaxHerding: $\left( \mathbbm{1}\left[ \hat{y}(\vecx_n;f) = \hat{y}(\vecx';f) \right] - \frac{1}{K} \right) \kernel(\vecx_n, \vecx'; g) $ instead of $\kernel(\vecx_n, \vecx'; g)$.
     If $\sigma \rightarrow 0$, it approaches to
     the uncertainty-based method where uncertainty is defined as $U''(\tvecx) \vcentcolon= \min_{\vecx' \in \setL \cup \{ \tvecx \} } \norm{\hat{y}(\vecx'; f) - \hat{p}(\vecx'; f)}_2^2 $.
\label{prop:badge}
\end{restatable}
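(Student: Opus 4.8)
The plan is first to pin down what ``BADGE with greedy kernel $k$-medoids'' optimizes, then push the two limits through that objective. BADGE's last-layer loss-gradient embedding of $\vecx$, evaluated at the model's own predicted label, is the outer product $\phi(\vecx) = q(\vecx) \otimes \varphi\bigl(g(\vecx)\bigr)$, where $q(\vecx) = \hat{y}(\vecx;f) - \hat{p}(\vecx;f)$ and $\varphi$ is the feature map of the (assumed positive-definite) kernel $\kernel(\cdot,\cdot;g)$; hence $\langle \phi(\vecx_n),\phi(\vecx')\rangle = \langle q(\vecx_n),q(\vecx')\rangle\,\kernel(\vecx_n,\vecx';g)$, and the squared RKHS distance between two such embeddings is exactly $-h(\vecx_n,\vecx')$ for $h$ as in \eqref{eq:badge-kernel}. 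So running greedy kernel $k$-medoids (MaxHerding) on these embeddings amounts to greedily selecting $\tvecx \in \setU$ to maximize $\frac1N\sum_{n=1}^N \max_{\vecx' \in \setL' \cup \{\tvecx\}} h(\vecx_n,\vecx')$, with $\setL' = \setL \cup \setS$ the current labeled-plus-selected set. After that, everything is substitution into $h$, simplification using $\kernel(\vecx,\vecx;g) = \psi(0) = 1$, and the observation that additive constants and positive scalar factors never change the $\argmax$ that defines the greedy step.

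For the first claim, $\hat{p}(\vecx;f) \to \tfrac1K\vec{1}$ forces $q(\vecx) \to \hat{y}(\vecx;f) - \tfrac1K\vec{1}$ with $\hat{y}(\vecx;f)$ still one-hot, so for \emph{every} pair $\vecx_n,\vecx'$ we get $\norm{q(\vecx)}_2^2 \to 1 - \tfrac1K$ and $\langle q(\vecx_n),q(\vecx')\rangle \to \mathbbm{1}[\hat{y}(\vecx_n;f) = \hat{y}(\vecx';f)] - \tfrac1K$. Plugging these and $\kernel(\vecx,\vecx;g)=1$ into \eqref{eq:badge-kernel} gives $h(\vecx_n,\vecx') \to 2\bigl(\mathbbm{1}[\hat{y}(\vecx_n;f)=\hat{y}(\vecx';f)] - \tfrac1K\bigr)\kernel(\vecx_n,\vecx';g) - 2\bigl(1-\tfrac1K\bigr)$. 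Since each inner $\max$ is over a fixed finite set and the outer sum is finite, the greedy objective converges to $\tfrac2N\sum_n \max_{\vecx'\in\setL'\cup\{\tvecx\}}\bigl(\mathbbm{1}[\hat{y}(\vecx_n;f)=\hat{y}(\vecx';f)] - \tfrac1K\bigr)\kernel(\vecx_n,\vecx';g)$ minus a constant; dropping the constant and the factor $2$ leaves exactly the MaxHerding rule with $\kernel(\vecx_n,\vecx';g)$ replaced by $\bigl(\mathbbm{1}[\hat{y}(\vecx_n;f)=\hat{y}(\vecx';f)] - \tfrac1K\bigr)\kernel(\vecx_n,\vecx';g)$, which is the first claim.

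For the second claim, injectivity of $g$ on $\setU$ together with $\psi(0)=1$ and $\psi(at)\to 0$ as $a\to\infty$ for unit $t$ gives $\kernel(\vecx_n,\vecx';g) \to \mathbbm{1}[\vecx_n=\vecx']$ for all $\vecx_n,\vecx'\in\setU$, hence $h(\vecx_n,\vecx') \to h_0(\vecx_n,\vecx')$ where $h_0(\vecx_n,\vecx')=0$ when $\vecx_n=\vecx'$ and $h_0(\vecx_n,\vecx') = -\norm{q(\vecx_n)}_2^2 - \norm{q(\vecx')}_2^2$ otherwise. Then $\max_{\vecx'\in\setL\cup\{\tvecx\}} h_0(\vecx_n,\vecx')$ equals $0$ when $\vecx_n \in \setL\cup\{\tvecx\}$ and $-\norm{q(\vecx_n)}_2^2 - \min_{\vecx'\in\setL\cup\{\tvecx\}}\norm{q(\vecx')}_2^2$ otherwise; summing over $n$ and discarding the parts that do not depend on $\tvecx$ (the $\norm{q(\vecx_n)}_2^2$ terms and the indices already ``covered'' by $\setL$), the surviving $\tvecx$-dependence of the objective is carried entirely by $\min_{\vecx'\in\setL\cup\{\tvecx\}}\norm{q(\vecx')}_2^2 = U''(\tvecx)$, so the rule degenerates to the uncertainty-based method driven by $U''$.

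The two substitutions and the continuity argument that lets the limit pass through the finite $\max$/$\sum$ are routine. The delicate step is the bookkeeping for the $\sigma\to0$ case: one has to split $\sum_n$ according to whether $\vecx_n\in\setL\cup\{\tvecx\}$, check that every term other than the one tied to $\min_{\vecx'}\norm{q(\vecx')}_2^2$ (and the single self-term $\vecx_n=\tvecx$) is constant in $\tvecx$, and then argue that optimizing what remains over $\tvecx\in\setU$ coincides, up to the usual affine-invariance of $\argmax$, with optimizing $U''$; one should likewise flag in the first claim that the ``modified kernel'' is merely an object the greedy rule is run on (it can be negative), so ``modified MaxHerding'' is meant in that algorithmic sense. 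I would also invoke explicitly, as the surrounding text does, that replacing the original clustering subroutine by greedy kernel $k$-medoids is the sanctioned substitution, and that $\kernel(\cdot,\cdot;g)$ is one of the positive-definite kernels actually used, which is what makes the outer-product/RKHS description of $\phi$ legitimate.
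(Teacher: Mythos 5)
Your proof follows essentially the same route as the paper's: substitute the limiting values of $\norm{q(\vecx)}_2^2$ and $\langle q(\vecx_n), q(\vecx')\rangle$ into the kernel $h$ of \eqref{eq:badge-kernel}, drop the additive constant and positive scalar for the first claim, and reduce $h$ to its diagonal/off-diagonal form under $\kernel(\vecx_n,\vecx';g) \to \mathbbm{1}[\vecx_n=\vecx']$ for the second. If anything, you are more explicit than the paper about the $\sigma \to 0$ bookkeeping (the self-term $\vecx_n = \tvecx$ and the signs in $-\norm{q(\vecx_n)}_2^2 - \norm{q(\vecx')}_2^2$), which the paper's own derivation elides.
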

With our parameter adaptation heuristics, hybrid methods can also smoothly interpolate between MaxHerding and uncertainty;
\cref{fig:param_adapt_badge} shows this improves BADGE.
Although selecting data points maximizing $U''(\tvecx)$ is counter-intuitive, lowering $\sigma$ still helps, as it stays away from zero.

\section{Experiments}
\label{sec:experiment}

In this section, we assess the robustness of the proposed UHerding against existing active learning methods
for standard supervised learning (\cref{subsec:interpolation,subsec:sota})
and transfer learning (\cref{subsec:finetune})
across several benchmark datasets: CIFAR10~\citep{cifar10krizhevsky}, CIFAR100~\citep{cifar100Krizhevsky}, TinyImageNet~\citep{tinyimagenet}, ImageNet~\citep{imagenet2009deng}, and DomainNet~\citep{domainNet2019peng}. 
\Cref{subsec:ablation} gives ablation studies to see how each component of UHerding contributes.

\paragraph{Active learning methods}
We compare with several active learning methods listed below.
We exclude ProbCover;
its generalization MaxHerding
reliably outperforms it \citep{maxherding2024bae}.

\begin{description}[itemsep=2pt,parsep=2pt,leftmargin=!,labelindent=0em,labelwidth=5em,itemindent=5em,labelsep=2ex,font=\normalfont\itshape]
    \item[Random] Uniformly select random $B$ data points from $\setU$. 
    \item[Confidence] Iteratively select $\vecx^* \in \argmin_{\tvecx \in \setU} p_1(y|\tvecx)$, with $p_1$ the highest predicted probability.
    \item[Margin] Iteratively select $\vecx^* = \argmin_{\tvecx \in \setU} p_1(y|\tvecx) - p_2(y|\tvecx)$, where $p_2$ is the second-highest predicted probability \citep{margin2001scheffer}.
    \item[Entropy]
    Iteratively select $\vecx^* = \argmax_{\tvecx \in \setU} \operatorname{H}(\hat y(\tvecx) \mid \tvecx)$ 
    , where $\operatorname{H}(\cdot)$ is the Shannon entropy~\citep{entropy2014wang}.

    \item[Weighted Entropy] Select points closest to the centroids of weighted $k$-means using unlabeled data points with high enough margins (as in Margin selection above)~\citep{wkmeans2019zhdanov}.
    
    \item[BADGE] Select points with the $k$-means++ initialization algorithm using gradient embeddings w.r.t.\ the weights of the last layer~\citep{badge2019ash}.

    \item[ALFA-Mix] Select data points closest to the centroids of $k$-means only using uncertain unlabeled data points, based on feature interpolation~\citep{alfamix2022parvaneh}.

    \item[ActiveFT] Select data points close to parameterized ``centeroids,'' learned by minimizing KL %
    between %
    the unlabeled set and selected set, along with diversity regularization~\citep{activeFT2023xie}.
    
    \item[Typiclust]  Run a clustering algorithm, \eg $k$-means. For each cluster, select a point with the highest ``typicality'' using $m$-Nearest Neighbors~\citep{typiclust2022hacohen}.

     \item[Coreset] Iteratively select points with $k$-Center-Greedy algorithm~\citep{coreset2017sener}: 
     $\vecx^* = \argmax_{\tvecx\in\setU} \min_{\vecx' \in \setL} {\lVert \tvecx - \vecx' \rVert}_2$.

    \item[MaxHerding] Iteratively select points to maximize the generalized coverage \eqref{eq:gen_coverage}~\citep{maxherding2024bae}.
\end{description}

\paragraph{Implementation} We always re-initialize models (cold-start), randomly or from pre-trained parameters, after each round of acquisition, rather than warm-starting from the previous model~\citep{warm2020ash}. 
We manually categorize the budget regimes into low, mid, and high for supervised learning, and low and high for transfer learning tasks. 
Representation-based methods win in low-budget regimes,
while uncertainty-based begin to catch up in mid-budget,
and win in high-budget.

\begin{figure*}[t!]
    \centering
    \includegraphics[width=\textwidth]{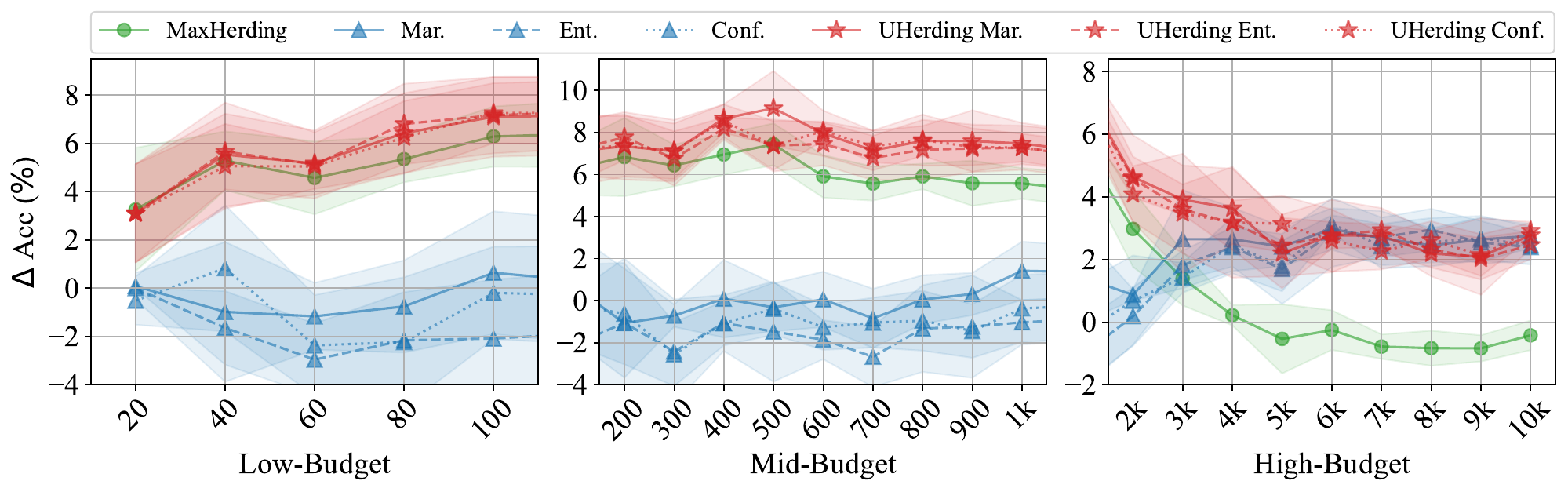}
    \vspace{-5mm}
    \coloredcaption{UHerding versus MaxHerding and uncertainty, with different uncertainty measures. Mean and standard deviation of 5 runs of the difference between a method and Random selection.}
    \label{fig:uherding}
    \vspace{-4mm}
\end{figure*}

\subsection{UHerding Interpolates, and the Uncertainty Measure Doesn't Matter}
\label{subsec:interpolation}

We first aim to verify that UHerding effectively interpolates between MaxHerding and uncertainty measures across budget regimes.
We train a randomly-initialized ResNet18~\citep{resnet2016he} on CIFAR10 using 5 random seeds,
gradually increasing the size of the labeled set,
as shown in \cref{fig:uherding}. 
We consider UHerding based on Margin (Mar.), Entropy (Ent.), and Confidence (Conf.).
The y-axis of \cref{fig:uherding} represents $\Delta$Acc, indicating the performance difference from Random selection.
UHerding performs slightly better or comparably to MaxHerding in the low-budget regime, with a growing performance gap in the mid-budget regime, and substantial improvements in the high-budget regime.
The opposite is true with uncertainty measures:
UHerding is substantially better with low budgets,
while they eventually catch up and tie UHerding with high budgets.
The choice of uncertainty measure has little impact on performance. 
We thus only use Margin as the UHerding uncertainty measure in the future unless otherwise specified.

\begin{figure*}[t!]
    \centering
    \begin{subfigure}[b]{0.95\textwidth} %
        \includegraphics[width=\textwidth]{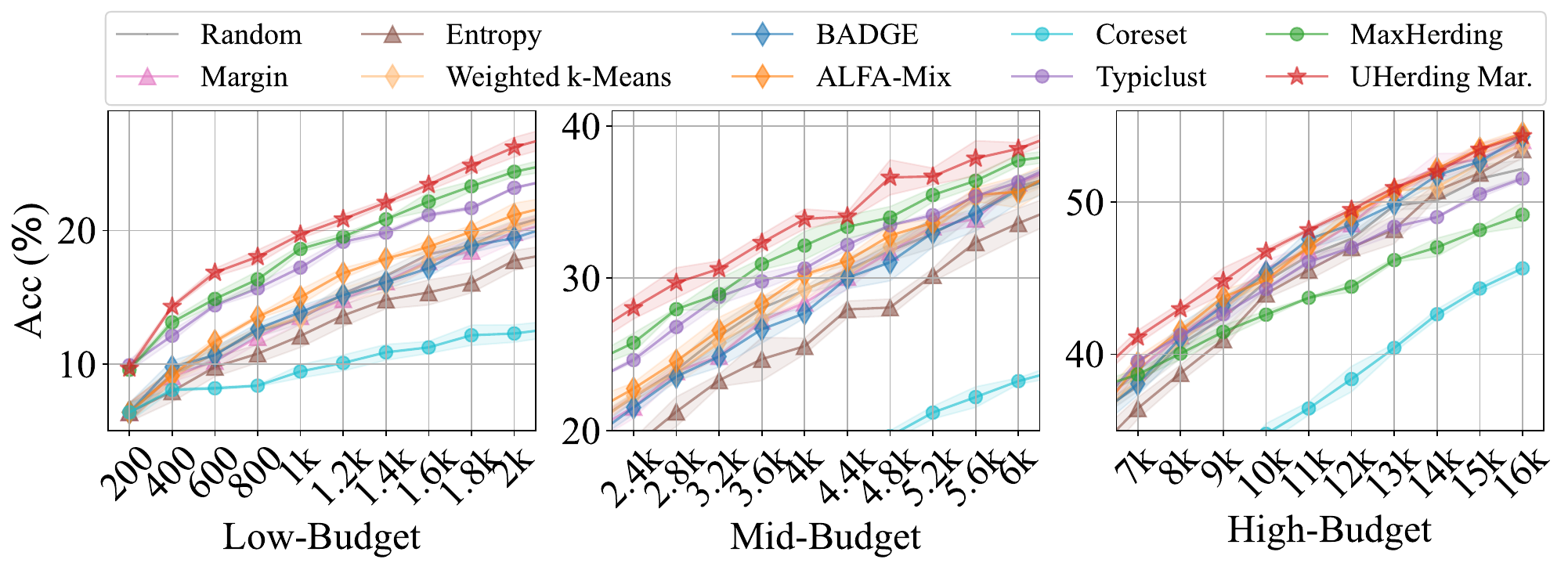}
        \coloredcaption{CIFAR100}
        \label{fig:supervised_cifar100}
    \end{subfigure}
    \hfill
    \begin{subfigure}[b]{0.95\textwidth} %
        \includegraphics[width=\textwidth]{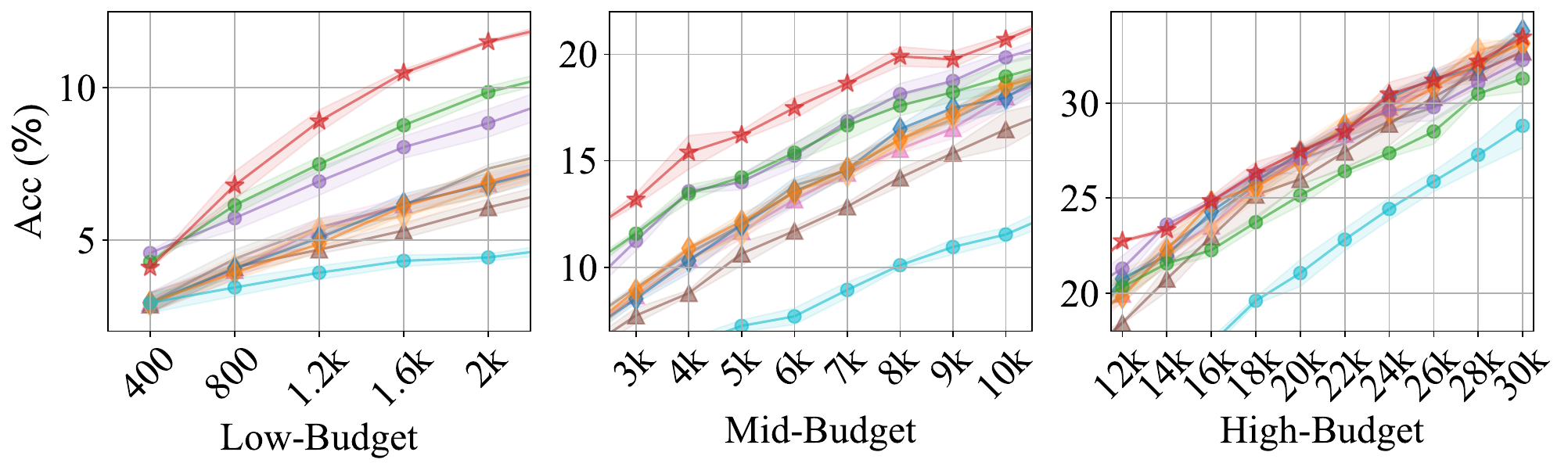}
        \coloredcaption{TinyImageNet}
        \label{fig:supervised_tinyimagenet}
    \end{subfigure}
    \vspace{-1mm}
    \coloredcaption{Comparison on CIFAR100 and TinyImageNet for supervised-learning tasks.}
    \vspace{-1mm}
    \label{fig:supervised}
\end{figure*}

\begin{figure*}[t!]
    \centering
    \begin{subfigure}[b]{0.49\textwidth} %
        \includegraphics[width=0.99\textwidth]{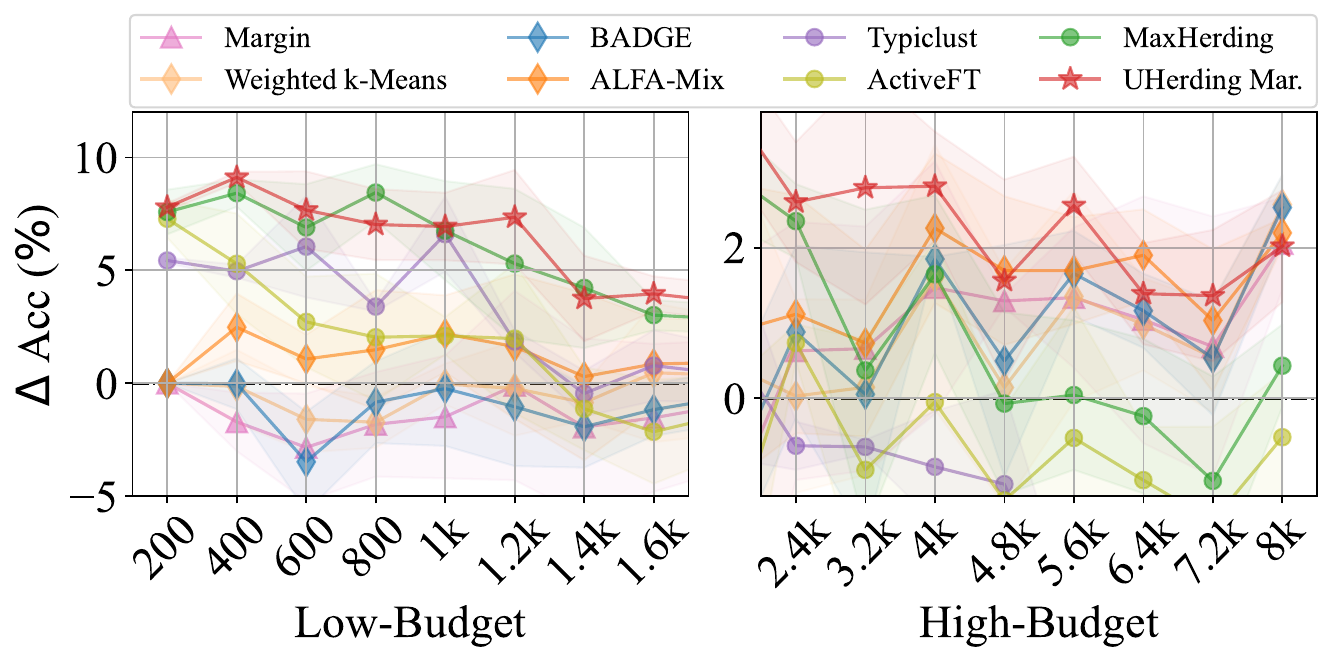}
        \coloredcaption{CIFAR100}
        \label{fig:finetune_cifar100}
    \end{subfigure}
    \hfill
    \begin{subfigure}[b]{0.49\textwidth} %
        \includegraphics[width=0.99\textwidth]{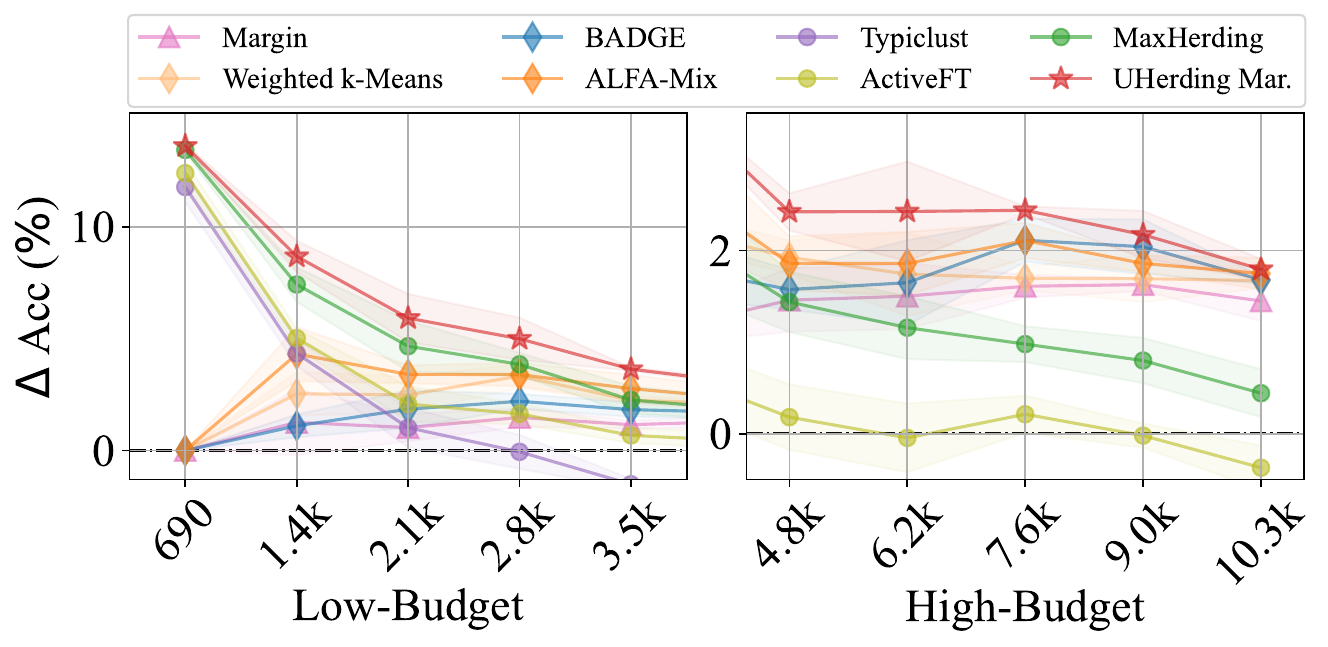}
        \coloredcaption{DomainNet}
        \label{fig:finetune_tinyimagenet}
    \end{subfigure}
    \vspace{-1mm}
    \coloredcaption{Comparison on CIFAR100 and DomainNet for transfer learning tasks.}
    \label{fig:finetune}
    \vspace{-3mm}
\end{figure*}

\subsection{Comparison with State of the Art}
\label{subsec:sota}

We now compare UHerding with state-of-the-art active learning methods, particularly hybrids, to assess their robustness across different budget regimes. %
\Cref{fig:supervised} compares CIFAR100 and TinyImagenet, using 3 runs of a randomly initialized ResNet18;
CIFAR10 results are in \cref{app:sec:sota}.
Overall trends are similar to before:
representation-based methods are good with low budgets
but lose as the budget increases;
uncertainty-based and hybrid methods are largely the opposite.
UHerding with Margin uncertainty (UHerding Mar{.})\ 
wins convincingly:
no competitor ever outperforms UHerding,
and for each competitor
there is some budget where UHerding wins substantially.

\subsection{Comparison for Transfer Learning Tasks}
\label{subsec:finetune}

Fine-tuning foundation models to new tasks or datasets is of increasing importance.
Inspired by ActiveFT~\citep{activeFT2023xie}, we compare UHerding to leading approaches for active transfer learning.

We use DeiT~\citep{deit2021touvron} pre-trained on ImageNet~\citep{imagenet2009deng}, following \citet{alfamix2022parvaneh,activeFT2023xie}. 
We fine-tune the entire model,
using DeiT Small for CIFAR-100
and DeiT Base for DomainNet.
\Cref{fig:finetune} compares UHerding with various active learning methods, including ActiveFT, which consistently underperforms the other methods likely due to its design being optimized for single-iteration data selection.
On CIFAR100, UHerding is comparable to representation-based methods in the low-budget regime but surpasses them by about $2\%$ in the high-budget; it substantially outperforms uncertainty and hybrid methods in the low-budget regime and ties or wins with high-budgets.
On DomainNet, UHerding outperforms MaxHerding by $1.5$--$2\%$ even in the low-budget.
Compared to the best-performing hybrid method ALFA-Mix, UHerding wins by up to $13\%$ with low budgets and performs similarly with high budgets.

It is also common to fine-tune only the last few layers, especially in meta-learning~\citep{simpleshot2019wang,baseline++2019chen,unraveling2020goldblum} and self-supervised learning~\citep{simclr2020chen,dino2021caron}. 
Similarly to \citet{maxherding2024bae}, we use DINO~\citep{dino2021caron} features fixed through fine-tuning;
we train a head of three fully connected ReLU layers
on ImageNet. 
\Cref{fig:finetune_imagenet}
shows similar results,
with UHerding consistently outperforming other methods across various budget regimes.

\cref{tbl:summary} summarizes the results of \cref{subsec:sota,subsec:finetune,app:sec:sota}, reporting the mean improvement/degradation over Random for each budget regime.\footnote{ActiveFT is reported only for DomainNet (Dom.) and ImageNet (ImN.), as it is designed for fine-tuning.}
UHerding wins across all budget regimes, while other methods have a significant range of budgets where they are worse than Random.

\begin{figure*}[t!]
    \centering
    \begin{subfigure}[b]{0.48\textwidth} %
        \includegraphics[width=\textwidth]{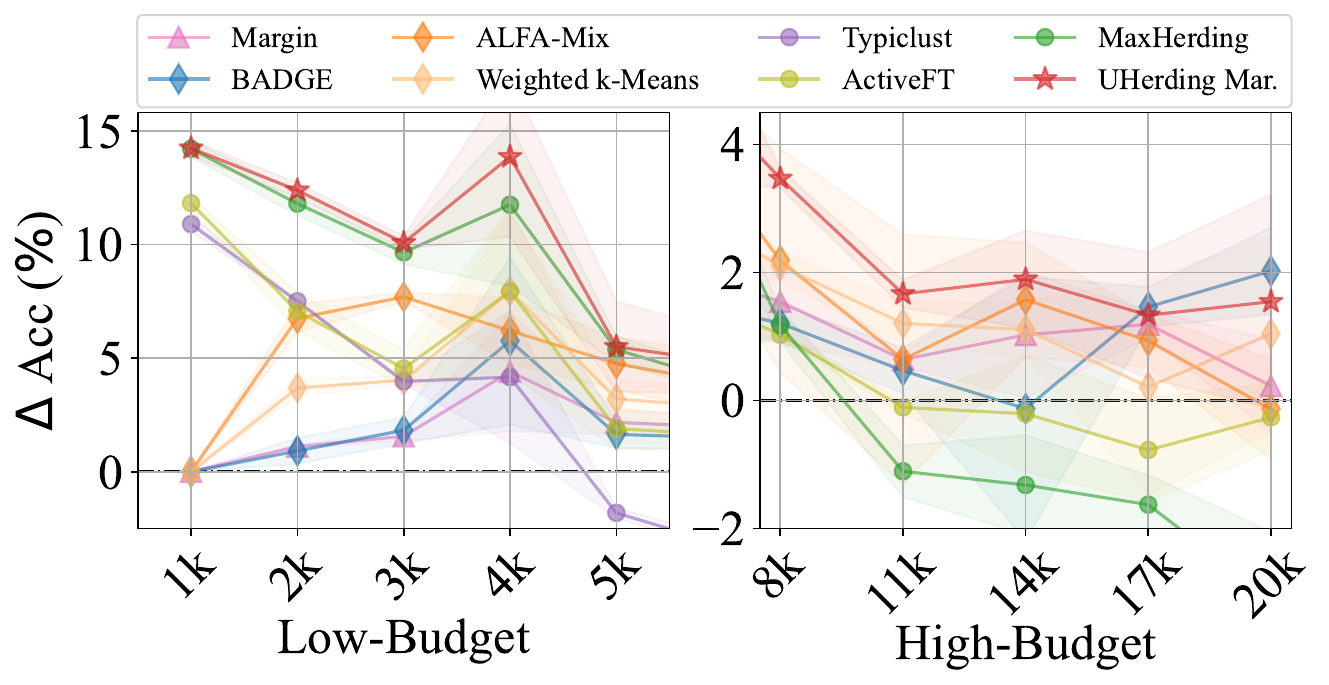}
        \coloredcaption{Transfer learning -- ImageNet}
        \label{fig:finetune_imagenet}
    \end{subfigure}
    \hfill
    \begin{subfigure}[b]{0.48\textwidth} %
    \includegraphics[width=\textwidth]{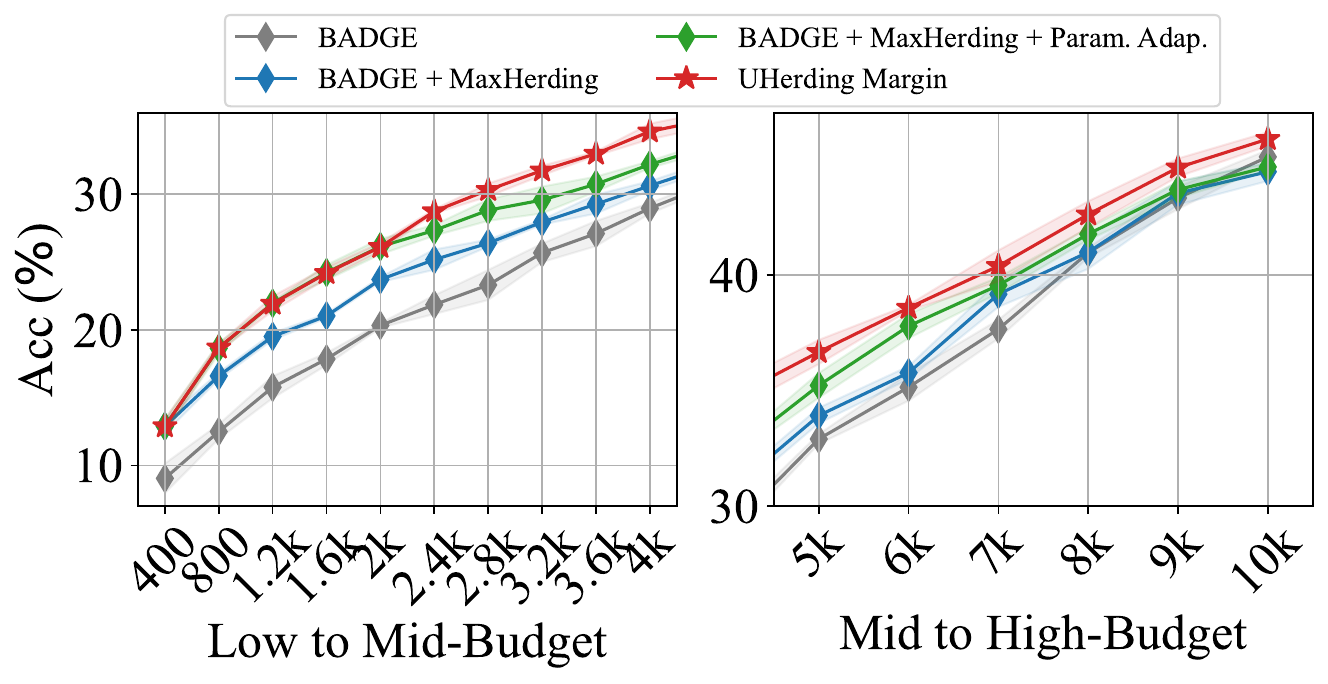}
        \coloredcaption{Parameter adaptation on BADGE}
        \label{fig:param_adapt_badge}
    \end{subfigure}
    \vspace{-1mm}
    \coloredcaption{(Left) results on ImageNet for fine-tuning, (Right) application of parameter adaptation.}
    \label{fig:finetune_and_change_params}
    \vspace{-1mm}
\end{figure*}

\subsection{Ablation Study}
\label{subsec:ablation}
\cref{fig:param_adapt_badge} gradually modifies BADGE
to be similar to UHerding with Margin uncertainty on CIFAR-100,
replacing $k$-means++ with MaxHerding, then adding parameter adaption. Each step improves; so does the final step to UHerding, which changes the choice of uncertainty.

\coloredtext{In \cref{app:components},
we analyze the contributions of each parameter adaptation component in UHerding. 
The baseline with fixed parameters performs slightly worse than MaxHerding at low- and mid-budgets, and worse than Margin at high-budgets. 
Adding temperature scaling results in notable improvements across all budget regimes, with further gains from incorporating radius adaptation.}

\newcolumntype{E}{>{\centering\arraybackslash}p{4.1em}}
\newcolumntype{D}{>{\centering\arraybackslash}p{1.65em}}
\begin{table*}[t!]
\centering
\fontsize{9.0}{11.0}\selectfont
\begin{tabu}{E|DDDDD|DDD|DDDDD}
\hline
\multirow{2}{*}{Method} & \multicolumn{5}{c|}{Low} & \multicolumn{3}{c|}{Middle} & \multicolumn{5}{c}{High}  \\
\cline{2-14}
 & C10 & C100 & Tiny. & Dom. & ImN. & C10 & C100 & Tiny. & C10 & C100 & Tiny. & Dom. & ImN.  \\
\hline
\hline

Entropy & -1.8 & -1.7 & -0.6 & -0.2 & 1.7 & -1.6 & -2.9 & -1.7 & 2.2 & -0.6 & -0.7 & 0.7 & \third{1.2} \\
Margin & -0.4 & -0.3 & -0.2 & 1.0 & 1.8 & -0.1 & -0.4 & -0.3 & \second{2.5} & \third{1.1} & 0.0 & 1.5 & 0.9 \\
\hline
BADGE & -0.5 & -0.1 & -0.2 & 1.4 & 2.0 & 0.6 & -0.7 & 0.0 & 2.2 & 0.9 & \second{0.4} & \third{1.8} & \second{1.0} \\
ALFA-M & 0.1 & 0.9 & -0.3 & 2.8 & 5.1 & 1.1 & 0.6 & 0.1 & \third{2.3} & \second{1.3} & 0.2 & \second{1.9} & \second{1.0} \\
Weight. k & -0.5 & -0.1 & -0.3 & 2.1 & 3.8 & 0.9 & 0.0 & -0.2 & 1.8 & 0.8 & \third{0.3} & 1.7 & 0.8 \\
\hline
Coreset & -2.7 & -4.5 & -1.4 & -3.5 & -6.6 & -13 & -11 & -5.4 & -10 & -9.6 & -5.5 & -2.7 & -12 \\
ActiveFT & -- & -- & -- & \third{4.4} & \third{6.6} & -- & -- & -- & -- & -- & -- & 0.0 & -0.1 \\
Typiclust & \third{3.7} & \third{3.3} & \third{1.6} & 3.1 & 4.9 & \third{4.9} & \third{1.8} & \second{2.1} &  -0.8 & -0.1 & \third{0.3} & -3.2 & -9.9 \\
MaxHerd. & \second{5.0} & \second{4.1} & \second{2.1} & \second{6.2} & \second{10.6} & \second{6.2} & \second{2.8} & \third{1.9} & 0.1 & -2.2 & -1.5 & 1.0 & -1.2 \\
\hline
UHerding & \first{5.5} & \first{5.5} & \first{3.1} & \first{7.4} & \first{11.2} & \first{7.8} & \first{4.3} & \first{3.7} & \first{3.0} & \first{2.1} & \first{0.8} & \first{2.3} & \first{2.0} \\
\Xhline{2\arrayrulewidth}
\end{tabu}
\vspace{-1mm}
\coloredcaption{Comparison of the mean improvement/degradation over Random selection on each budget regime and dataset. The \first{first}, \second{second}, \third{third} best results for each setting are marked.}
\label{tbl:summary}
\vspace{-1mm}
\end{table*}

\section{Conclusion}
In this work, we introduced uncertainty coverage, an objective that unifies low- and high-budget active learning objectives through a smooth interpolation with adaptive parameter adjustments. 
We showed generalization guarantees for the optimization of this coverage.
By identifying conditions under which uncertainty coverage approaches generalized coverage and uncertainty measures, we made UHerding robust across various budget regimes. 
This adaptation also enhances an existing hybrid active learning method when similar parameter adjustments are applied. 
UHerding achieves state-of-the-art performance across various active learning and transfer learning tasks and, to our knowledge, is the only method that consistently performs well in both low- and high-budget settings. 

\subsubsection*{Acknowledgments}
This work was supported in part by
Mitacs through the Mitacs Accelerate program,
the Natural Sciences and Engineering Resource Council of Canada,
the Canada CIFAR AI chairs program,
and Advanced Research Computing at the University of British Columbia.

\bibliography{iclr2025_conference}
\bibliographystyle{iclr2025_conference}

\clearpage
\appendix

\section{Proofs} \label{app:proofs}

\subsection{Estimation Quality}
\label{subsec:estimation}

\uniconv*
\begin{proof}
    Rather than operating directly on sets $\setS \subseteq \setX$,
    we will operate on $B$-tuples $\setT \in (\R^d)^B$
    corresponding to $( g(\vecs_1), \dots, g(\vecs_B) )$.
    As these are ordered tuples,
    the mapping from $\setT$ to $\setS$ is many-to-one even if $g$ is injective;
    if we prove convergence for all $\setT$,
    it will necessarily prove convergence for all $\setS$.
    Define $F(\setT) = \ucover(\setL \cup \setS)$
    and $\hat F(\setT) = \hucover(\setL \cup \setS)$
    for any $\setS$ corresponding to that $\setT$;
    this is well-defined, since $\ucover$ depends on $\setS$ only through $\{ g(\vecs) : \vecs \in \setS \}$.

    Now, we will construct a vector space for elements $\setT = (\vect_1, \dots, \vect_B)$.
    Vector addition and scalar multiplication are defined elementwise,
    and a norm is defined as
    $\norm{\setT} = \max( \norm{\vect_1}, \dots, \norm{\vect_B} )$,
    where $\norm{\vect_i}$ is the standard Euclidean norm.
    This space is complete,
    and hence a Banach space of dimension $B d$.

    The uncertainty coverage $F(\setT)$
    is Lipschitz with respect to the Banach space norm:
    \begin{align*}
           \abs{F(\setT) - F(\setT')}
      &\le \E_{\vecx} U(\vecx; f) \abs{
            \max_{\vect \in \{ g(\tvecx) : \tvecx \in \setL \} \cup \setT} \tkernel(g(\tvecx), \vect)
         -  \max_{\vect' \in \{ g(\tvecx) : \tvecx \in \setL \} \cup \setT'} \tkernel(g(\vecx), \vect')
         }
    \\&\le \E_{\vecx} U(\vecx; f) \max_{i \in [B]}{
            \abs{\tkernel(g(\vecx), \vect_i) - \tkernel(g(\vecx), \vect'_i)}
         }
    \\&\le \E_{\vecx} U(\vecx; f) \max_{i \in [B]}{
            L_\sigma \norm{\vect_i - \vect'_i}
         }
    \\&  = L_\sigma \left( \E_{\tvecx} U(\tvecx; f) \right) \norm{\setT - \setT'}
    .\end{align*}
    The second inequality holds because the maximum function is Lipschitz on $\R^N$.
    Specifically,
    let $a_1, \dots, a_N, a'_1, \dots, a'_N \in \R$,
    and let $\hat n \in \argmax_{n \in [N]} a_n$.
    Then
    \[
        \max_{n \in [N]} a_n - \max_{n \in [N]} a'_n
        = a_{\hat n} - \max_{n \in N} a'_n
        \le a_{\hat n} - a'_{\hat n}
        \le \max_{n \in N} a_n - a'_n
        \le \max_{n \in N}\abs{a_n - a'_n}
    ,\]
    and, symmetrically, it is at least $- \max \abs{a_n - a'_n}$,
    so $\abs{\max_n a_n - \max_n a'_n} \le \max_n \abs{a_n - a'_n}$.

    As $\hat F$ is exactly $F$
    with an empirical distribution for $\vecx$,
    it is $L_\sigma \left( \frac 1 N \sum_{n=1}^N U(\vecx_n) \right)$-Lipschitz.
    We thus have
    \begin{align*}
           \abs{\bigl( F(\setT) - \hat F(\setT) \bigr) - \bigl( F(\setT') - \hat F(\setT') \bigr)}
      &\le L_\sigma \left( \E_{\vecx} U(\vecx) + \frac1N \sum_{n=1}^N U(\vecx) \right) \norm{\setT - \setT'}
    \\&\le 2 L_\sigma U_{\max} \norm{\setT - \setT'}
    .\end{align*}

    By Proposition 5 of \citet{cucker-smale},
    we can cover the ball
    $\{ \setT : \norm{\setT} \le R \}$
    with at most $\left( 4 R / \eta \right)^{Bd}$
    balls of radius $\eta$
    with respect to the metric $\norm{\setT - \setT'}$.
    So, to construct our covering argument,
    we apply the (bidirectional) Hoeffding inequality
    to the center of each of these balls,
    with failure probability $\delta / \left( 4 R / \eta \right)^{B d}$ for each.
    Combining this with how much $F - \hat F$ can change between an arbitrary point in $\{ \setT : \norm\setT \le R \}$ and the nearest center,
    we have that for all $\eta \in (0, R)$,
    it holds with probability at least $1 - \delta$ that
    \begin{align*}
        \sup_{\setT} \abs{F(\setT) - \hat F(\setT)}
        \le 2 L_\sigma U_{\max} \eta
        + U_{\max} \sqrt{\frac{Bd}{2N} \log \frac{4 R}{\eta} + \frac{1}{2N} \log \frac2\delta}
    .\end{align*}
    The result follows by picking $\eta = 4 \sqrt{B / N}$
    and using $\log(a) = \frac12 \log(a^2)$.
\end{proof}

\subsection{Parameter Adaptation}
\label{subsec:proof_adap}

\togcover*

\begin{proof}
As $U(\vecx) \rightarrow c \, \forall \vecx \in \setX$, the following equality holds:
\begin{align*}
  \lim_{U(\vecx) \rightarrow c} \hucover( \setS ) &= \lim_{U(\vecx) \rightarrow c} \frac{1}{N} \sum_{n=1}^N U(\vecx_n) \cdot \max_{\vecx' \in \setS } \kernel(\vecx_n, \vecx') \\
  &= \frac{1}{N} \sum_{n=1}^N \lim_{U(\vecx_n) \rightarrow c} U(\vecx_n) \cdot \max_{\vecx' \in \setS } \kernel(\vecx_n, \vecx') = c \cdot \hgcover(\setS).
\end{align*}
The last equality holds since each term with limit inside the sum converges to a specific value.
\end{proof}

\tounc*

\begin{proof}
As $\sigma \rightarrow 0$, the function $\kernel$ approaches to the following form:  
\[ \kernel(\vecx, \vecx') = \begin{cases} 
      1 & \text{if } \vecx = \vecx', \\
      0 & \text{otherwise} 
.\end{cases}
\]
Then, we have
\begin{align*}
    \hucover(\setS) &= \frac{1}{N} \sum_{n=1}^N U(\vecx_n) \cdot \max_{\vecx' \in \setS} \kernel(\vecx, \vecx') \\
    &= \frac{1}{N} \sum_{n=1}^N U(\vecx_n) \cdot \mathbbm{1} [ \, \exists \vecx'\in \setS \,\text{ s.t }\, \vecx_n = \vecx' ] 
    \propto  \sum_{s=1}^{\sizeS} U(\vecx_s)
\end{align*}
Note that the indicator function is 1 only if $\vecx_n$ is equal to one of data points in $\setS$.
\end{proof}

\subsection{Greedy algorithm}

\coveroptim*
\begin{proof}
    First, $\hucover$ is submodular by \cref{thm:submodular}.
    Thus, 
    if we let $\bar\setS \in \argmax_{\setS \subseteq \setU : \abs\setS = B} \hucover(\setS)$,
    the classical result of \citet{greedy}
    implies that
    $\hucover(\hat\setS) \ge \left( 1 - \frac1e \right) \hucover(\bar\setS)$.

    Let $\setS^* \in \argmax_{\setS \subseteq \setU : \abs\setS = B} \ucover(\setS)$
    be the optimal size-$B$ subset of $\setU$ for $\ucover$.
    By definition,
    $\hucover(\bar\setS) \ge \hucover(\setS^*)$.
    Thus,
    calling the bound on the worst-case absolute error of the coverage estimate $\varepsilon$,
    it holds with probability at least $1 - \delta$ that
    \[
        \ucover(\hat\setS)
        \ge \hucover(\hat\setS) - \varepsilon
        \ge \left( 1 - \frac1e \right) \hucover(\setS^*) - \varepsilon
        \ge \left( 1 - \frac1e \right) \ucover(\setS^*) - \left(2 - \frac1e\right) \varepsilon
    .\qedhere\]
\end{proof}

The following result guarantees that greedy optimization of $\ucover$ or $\hucover$ achieves a $\left(1 - 1/e \right)$ approximation, by the result of \citet{greedy}.

\begin{lemma} \label{thm:submodular}
    The functions $\ucover$ and $\hucover$ are nonnegative, submodular, monontone functions;
    thus so are the functions
    $\setS \mapsto \ucover(\setL \cup \setS)$ and $\setS \mapsto \hucover(\setL \cup \setS)$
    for any fixed $\setL$.
\end{lemma}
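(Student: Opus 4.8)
The plan is to verify the three properties---nonnegativity, monotonicity, and submodularity---directly from the definitions, first for $\hucover$ (which is a finite average) and then for $\ucover$ (an expectation), and finally note that composing with the fixed set $\setL$ preserves all three. The key structural observation is that for each fixed data point $\vecx_n$ (resp.\ each $\vecx$ in the population), the map
\[
    \setS \;\mapsto\; \max_{\vecx' \in \setS} \kernel(\vecx_n, \vecx'; g)
\]
is a ``max-of-nonnegatives'' set function, and such functions are well known to be nonnegative, monotone, and submodular; the full objective is then a nonnegative-weighted sum (or integral) of these, with weights $U(\vecx_n; f) \ge 0$, so it inherits all three properties since each is preserved under nonnegative linear combinations and under limits/integrals of such.

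Concretely, I would proceed as follows. (1) \emph{Nonnegativity} is immediate: $U \ge 0$ and $\kernel \ge 0$, so every summand of $\hucover(\setS)$ is nonnegative, and likewise the integrand of $\ucover(\setS)$; by convention $\max_{\vecx' \in \emptyset} = 0$ so the empty set gives value $0$. (2) \emph{Monotonicity}: if $\setA \subseteq \setB$, then for each $\vecx_n$, $\max_{\vecx' \in \setA} \kernel(\vecx_n,\vecx';g) \le \max_{\vecx' \in \setB} \kernel(\vecx_n,\vecx';g)$ because the maximum is taken over a larger set of nonnegative values; multiplying by $U(\vecx_n;f) \ge 0$ and summing (resp.\ integrating) preserves the inequality. (3) \emph{Submodularity}: fix $\vecx_n$ and write $\phi_n(\setS) = \max_{\vecx' \in \setS} \kernel(\vecx_n,\vecx';g)$. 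For $\setA \subseteq \setB$ and a point $\vect \notin \setB$, one checks the diminishing-returns inequality
\[
    \phi_n(\setA \cup \{\vect\}) - \phi_n(\setA)
    \;\ge\;
    \phi_n(\setB \cup \{\vect\}) - \phi_n(\setB)
\]
by a short case analysis: both marginal gains equal $\max\bigl(\kernel(\vecx_n,\vect;g) - \phi_n(\cdot),\, 0\bigr)$, and since $\phi_n(\setA) \le \phi_n(\setB)$ by monotonicity, the left side is at least the right side. Then $\hucover(\setS) = \frac1N \sum_n U(\vecx_n;f)\,\phi_n(\setS)$ is a nonnegative combination of submodular functions, hence submodular; the same argument with $\E_\vecx[\,\cdot\,]$ in place of $\frac1N\sum_n$ handles $\ucover$ (submodularity is preserved under expectation since the defining inequality holds pointwise in $\vecx$ and expectation is monotone/linear). (4) Finally, for fixed $\setL$, the function $\setS \mapsto \hucover(\setL \cup \setS)$ is the composition of $\hucover$ with $\setS \mapsto \setL \cup \setS$; it is standard that restricting a submodular function along $\setS \mapsto \setL \cup \setS$ again yields a nonnegative, monotone, submodular function, and the same for $\ucover$.

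I do not expect a genuine obstacle here---this is a routine verification---but the step requiring the most care is the case analysis for the diminishing-returns inequality of $\phi_n$, specifically handling the subcase where adding $\vect$ already helps on $\setA$ but not on $\setB$ (i.e.\ $\kernel(\vecx_n,\vect;g)$ lies between $\phi_n(\setA)$ and $\phi_n(\setB)$), which is exactly where the gain ``diminishes.'' One must also be mild about measurability/integrability when passing to $\ucover$, but since $U \le U_{\max}$ and $\kernel \le 1$ everything is bounded, so the expectation is finite and the interchange of inequality with $\E_\vecx$ is unproblematic.
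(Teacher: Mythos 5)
Your proposal is correct and follows essentially the same route as the paper: both arguments rest on the identity that the marginal gain of adding $\tvecx$ is $\max\bigl(\kernel(\vecx,\tvecx;g) - \max_{\vecx'\in\setA}\kernel(\vecx,\vecx';g),\,0\bigr)$, which is pointwise decreasing in the current coverage, and then on the fact that nonnegativity, monotonicity, and submodularity are preserved under nonnegative weighting and averaging/expectation. The only cosmetic difference is that you factor the argument through the per-point functions $\phi_n$ while the paper carries the expectation throughout; the content is identical.
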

\begin{proof}
We prove that UCoverage is non-negative monotone submodular;
this implies that $\hucover$ is as well,
as it is an instance of $\ucover$ with an empirical distribution for $\vecx$.

For $\vecx, \vecx' \in \setX$, we assumed that $U(\vecx) \geq 0$ and $\kernel(\vecx, \vecx') \geq 0$. Thus, for any subset $\setA \subseteq \setX$, $\ucover(\setA) \geq 0$.

Next, we show monotonocity: for all $\setA \subseteq \setB \subseteq \setX$, $\ucover(\setA) \leq \ucover(\setB)$. 
\begin{align*}
    \ucover(\setB) &= \E_\vecx [U(\vecx) \cdot \max_{\vecx' \in \setB} \kernel(\vecx, \vecx') ] \\
    &= \E_\vecx \left[ U(\vecx) \cdot \max \left( \max_{\vecx' \in \setB \setminus \setA} \kernel(\vecx, \vecx'), \max_{\vecx' \in \setA} \kernel(\vecx, \vecx') \right) \right] \\
    &\geq \E_\vecx [U(\vecx) \cdot \max_{\vecx' \in \setA} \kernel(\vecx, \vecx') ]
    = \ucover(\setA).
\end{align*}

Lastly, we show submodularity:
for all $\setA \subseteq \setB \subseteq \setX$,
\begin{align*}
    \ucover(\setA \cup \{\tvecx\}) - \ucover(\setA)  &= \E_\vecx \left[ U(\vecx) \cdot \max \left( \kernel(\vecx, \tvecx) - \max_{\vecx' \in \setA} \kernel(\vecx, \vecx'), 0  \right) \right] \\
    &\geq \E_\vecx \left[ U(\vecx) \cdot \max \left( \kernel(\vecx, \tvecx) - \max_{\vecx' \in \setB} \kernel(\vecx, \vecx'), 0  \right) \right]\\
    &= \ucover(\setB \cup \{\tvecx\}) - \ucover(\setB) 
.\qedhere\end{align*}
\end{proof}

\subsection{Connections to Hybrid Methods}

\wkmeans*

\begin{proof}
    With the modification of the $k$-means objective into a greedy kernel $k$-medoids objective, the objective of weighted $k$-means with $U'(\vecx)$ as weights can be converted into:
    \begin{align}
    \vecx^* &\in \argmax_{\tvecx \in \mathcal{X}} \mathbbm{1}[U'(\tvecx) \geq \nu] \cdot \frac{1}{N} \sum_{n=1}^N U'(\vecx_n) \cdot \min_{x' \in \setL \cup \{\tvecx\} } \lVert \phi(\vecx_n) - \phi(\vecx')\rVert_\mathcal{H}^2\\
    &= \argmax_{\tvecx \in \mathcal{X}}  \frac{1}{N} \sum_{n=1}^N U(\vecx_n) \cdot \min_{x' \in \setL \cup \{\tvecx\} } \lVert \phi(\vecx_n) - \phi(\vecx')\rVert_\mathcal{H}^2 \\
    &= \argmax_{\tvecx \in \mathcal{X}} \frac{1}{N} \sum_{n=1}^N U(\vecx_n) \cdot \max_{x' \in \setL \cup \{\tvecx\} } \kernel(\vecx_n, \vecx')
    \label{eq:wkmeans}
.\end{align}
This is equivalent to the objective of UHerding with $U(\vecx)$ as the choice of uncertainty. 
\end{proof}

\alfamix*
\begin{proof}
    ALFA-Mix selects closest data points to the center of $k$-means clusters where $k$-means is fitted with filtered data points.
    It keeps a data point $\vecx$ if it satisfies that $\exists\, \text{ class } j \text{ s.t. }\; \hat{y}\bigl(\alpha_j(\vecx) \, g(\vecx) + (1-\alpha_j(\vecx)) \, \bar g_j; f\bigr) \neq \hat{y}(g(\vecx); f)$, which we can express as the indicator function in \cref{eq:alphamix_unc}.
          
    With the replacement of $k$-means with a greedy kernel $k$-medoids objective, the objective of ALFA-Mix can be converted into:
    \begin{align*}
         \vecx^* &\in \argmax_{\tvecx \in \mathcal{X}} \frac{1}{N} \sum_{n=1}^N   U(\vecx; f) \cdot \max_{\vecx' \in \setL \cup \{\tvecx\}}  \kernel(\vecx_n, \vecx').\qedhere\end{align*}
\end{proof}

\badge*
\begin{proof}

    Recall $q(\vecx) = \hat{y}(\vecx; f) - \hat{p}(\vecx; f)$. As $\forall \vecx \in \setU ,\, \hat{p}(\vecx; f) \rightarrow \frac{1}{K} \vec{1} $, it is true that $\norm{q(\vecx)}_2^2 \rightarrow 1 - \frac{1}{K}$ and $\langle q(\vecx), q(\vecx') \rangle \rightarrow \mathbbm{1}[\hat{y}(\vecx; f) = \hat{y}(\vecx'; f)] - \frac{1}{K}$.
    With the assumption that $\forall \vecx \in \setX ,\, \kernel(\vecx, \vecx; g) = c$,
    \begin{align}
    h(\vecx_n, \vecx') \rightarrow 2 \left( \mathbbm{1}[\hat{y}(\vecx_n; f) = \hat{y}(\vecx'; f) ] - \frac{1}{K} \right) \kernel(\vecx_n, \vecx'; g) - 2c \left(1 - \frac{1}{K}\right). 
    \end{align}
    Therefore, the following is true:
    \begin{align}
        \vecx^* &\in \argmax_{\tvecx \in \setU} \frac{1}{N} \sum_{n=1}^N \max_{\vecx' \in \setL \cup \{ \tvecx\}} h(\vecx_n, \vecx) \\
        &= \argmax_{\tvecx \in \setU} \frac{1}{N} \sum_{n=1}^N \max_{\vecx' \in \setL \cup \{ \tvecx\}} \left( \mathbbm{1}[\hat{y}(\vecx_n; f) = \hat{y}(\vecx'; f) ] - \frac{1}{K} \right) \kernel(\vecx_n, \vecx'; g).
    \end{align}
    If $\sigma \rightarrow 0$, $\kernel(\vecx, \vecx'; g) = \mathbbm{1}[\vecx = \vecx'] $.
    Then, $\max_{\vecx' \in \setL \cup \{\tvecx\}} h(\vecx_n, \vecx') \rightarrow \min_{\vecx' \in \setL \cup \{\tvecx\}} \norm{q(\vecx_n)}_2^2 + \norm{q(\vecx')}_2^2 = \min_{\vecx' \in \setL \cup \{\tvecx\}} \norm{\hat{y}(\vecx'; f) - \hat{p}(\vecx'; f)}_2^2$. Then,
    \begin{align}
        \vecx^* &\in \argmax_{\tvecx \in \setU} \frac{1}{N} \sum_{n=1}^N \max_{\vecx' \in \setL \cup \{ \tvecx\}} h(\vecx_n, \vecx) \\
        &= \argmax_{\tvecx \in \setU} \min_{\vecx' \in \setL \cup \{\tvecx\}} \norm{\hat{y}(\vecx'; f) - \hat{p}(\vecx'; f)}_2^2
    \end{align}
    Therefore, BADGE approaches to $\argmax_{\tvecx \in \setU} U''(\tvecx)$ as $\sigma \rightarrow 0$.
\end{proof}

\begin{figure*}[t!]
    \centering
    \begin{subfigure}[b]{0.95\textwidth} %
        \includegraphics[width=\textwidth]{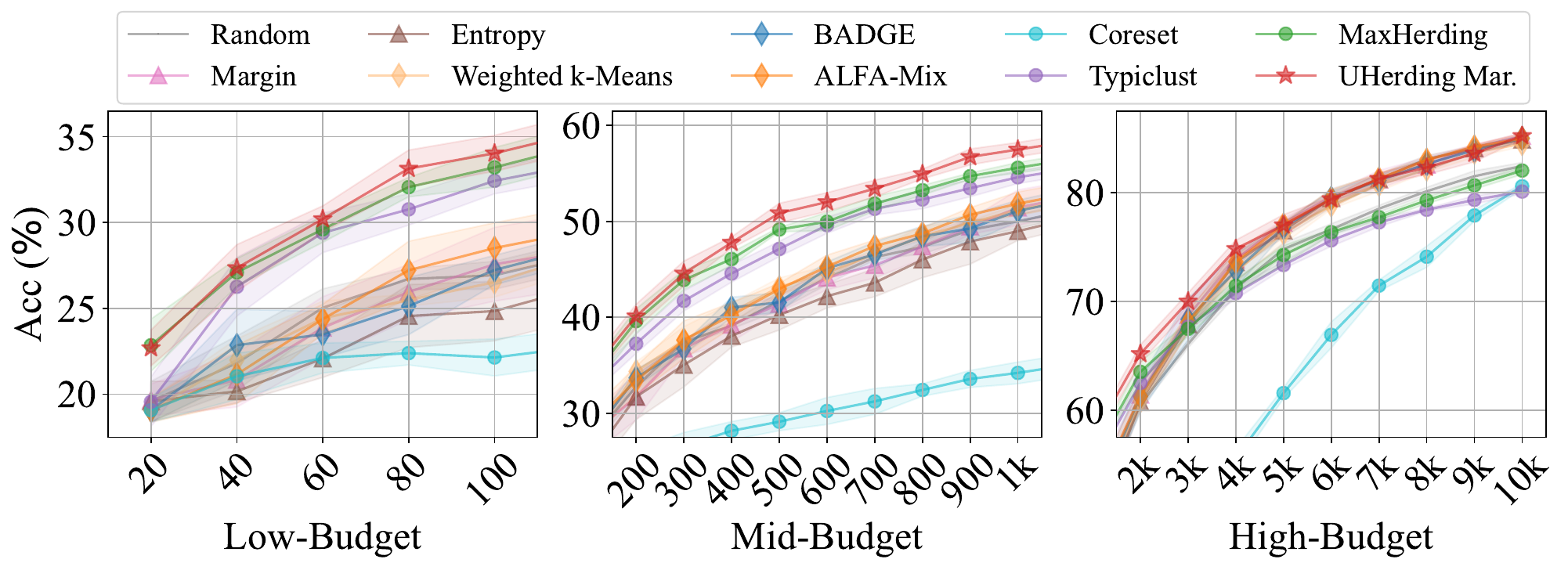}
        \label{fig:supervised_cifar10}
    \end{subfigure}
    \vspace{-5mm}
    \coloredcaption{Comparison on CIFAR10 for supervised-learning tasks.}
    \label{app:fig:supervised}
\end{figure*}

\section{Additional Comparison with State of the Art}
\label{app:sec:sota}

In addition to \cref{fig:supervised} where we compare state-of-the-art active learning methods on CIFAR100 and TinyImageNet datasets for supervised learning tasks, we provide \cref{app:fig:supervised} for additional results on CIFAR10 dataset.
Again, we employ a ResNet18 randomly initialized at each iteration. 

Similar to the results on CIFAR100 and TinyImageNet datasets, UHerding significantly outperforms representation-based methods in high-budget regimes, and uncertainty-based methods in low- and mid-budget regimes, confirming the robustness of UHerding over other active learning methods.

\section{Component analysis of UHerding} \label{app:components}

\coloredtext{
In this ablation study, we examine the contribution of each component of UHerding to its overall generalization performance. 
Specifically, we incrementally add each of temperature scaling (Temp. Scaling) and adaptive radius (Radius Adap.) individually to the UHerding baseline, which selects data points based solely on the UHerding acquisition defined in \cref{eq:uherding}. We also evaluate the combined effect of both components.
}
As in \cref{subsec:interpolation}, we use CIFAR-10 dataset, and present the results using $\Delta$Acc relative to Random selection to clearly highlight the differences.

Although temperature scaling \coloredtext{enhances performance} over the UHerding baseline, its performance is still comparable to MaxHerding in the low-budget regime, and worse than Margin in the high-budget regime.
\coloredtext{Radius adaptation generally matches or exceeds MaxHerding across budget regimes but quickly converges to Margin's performance in the high-budget regime.}
\coloredtext{When both temperature scaling and radius adaptation are applied, performance surpasses both MaxHerding and Margin across all budget regimes, with saturation in the high-budget regime occurring significantly more gradually than with radius adaptation alone.
Please note that Margin's performance in the low- and mid-budget regimes is  not visible due to its exceptionally poor results relative to other methods.
}

\begin{figure*}[t!]
    \centering
    \begin{subfigure}[b]{0.74\textwidth} %
        \includegraphics[width=\textwidth]{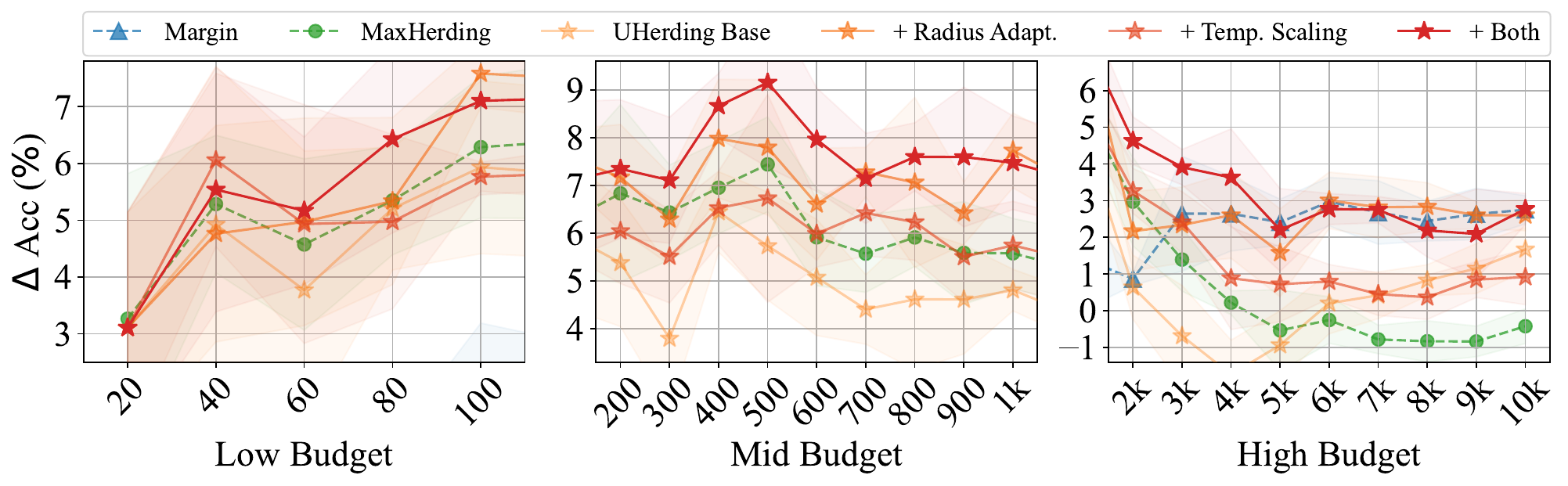}
        \coloredcaption{Comparison with and without components of parameter adaptation.}
        \label{app:fig:ablation_components}
    \end{subfigure}
    \hfill
    \begin{subfigure}[b]{0.25\textwidth} %
        \includegraphics[width=\textwidth]{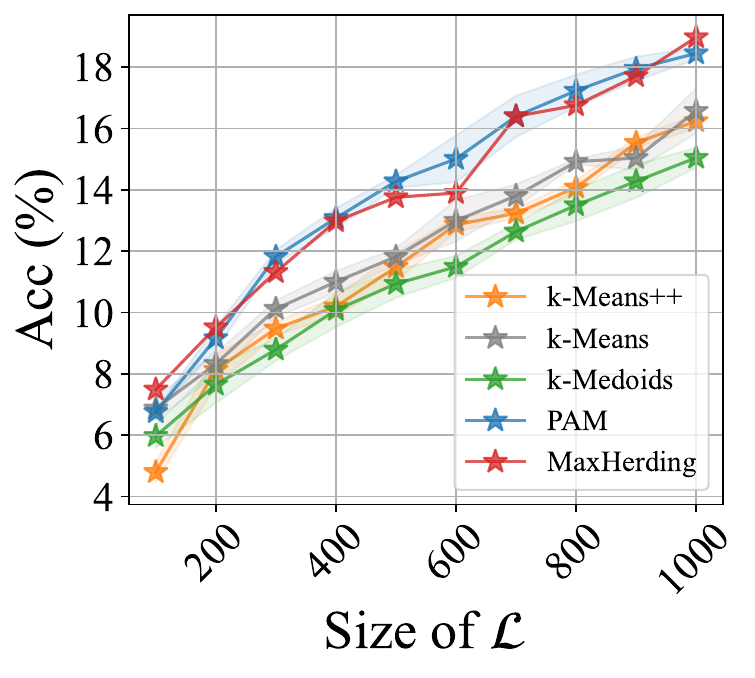}
        \caption{Clustering methods}
        \label{app:fig:clustering}
    \end{subfigure}
    \coloredcaption{Comparison of some components in UHerding (Left) and clustering methods  (Right)}
    \label{app:fig:ablation}
\end{figure*}

\begin{coloredblock}
    \section{Comparison of Clustering Methods}
    \label{app:sec:ablation}
\end{coloredblock}

\coloredtext{As noted in \cref{sec:related},} we compare several clustering methods applied to BADGE~\cite{badge2019ash} to justify the replacement of $k$-means and $k$-means++ of existing active learning methods with MaxHerding.
\Cref{app:fig:clustering} compares $k$-means, $k$-means++, $k$-medoids (iterative optimization), partition around medoids (PAM), and MaxHerding, a greedy kernel $k$-medoids.
We train a ResNet18 randomly initialized at each iteration on CIFAR100.

Surprisingly, $k$-medoids performs slightly worse than $k$-means, showing that searching for medoids is not necessarily better than search of means.
However, the gap between PAM and $k$-medoids shows that optimization methods do make significant changes.
Although PAM works the best overall, MaxHerding is comparable with much less computation~\cite{maxherding2024bae}.
It justifies the replacement of $k$-means and $k$-means++ with MaxHerding for existing active learning methods.

\end{document}